\definecolor{skipcolor}{rgb}{1, .92, .92}
\definecolor{rejoincolor}{rgb}{.9, 1, 0.9}
\definecolor{frameskip}{rgb}{0.95, 0, 0}
\definecolor{framerejoin}{rgb}{0, .75, 0}
\definecolor{gold}{rgb}{1,0.553,0}
\definecolor{lightbrown}{rgb}{0.9305,0.86275,0.8}
\definecolor{fillcopper}{rgb}{0.722,0.451,0.2}
\definecolor{fillsilver}{rgb}{0.753,0.753,0.753}
\definecolor{fillgray}{rgb}{0.4,0.4,0.4}
\definecolor{filllightgray}{rgb}{0.6,0.6,0.6}
\definecolor{fillLightgray}{rgb}{0.7,0.7,0.7}
\definecolor{fillLLightgray}{rgb}{0.8,0.8,0.8}
\definecolor{fillLLLightgray}{rgb}{0.9,0.9,0.9}
\definecolor{fillred}{rgb}{1,0.15,0.15}
\definecolor{filllightred}{rgb}{1,0.3,0.3}
\definecolor{fillblue}{rgb}{0.2,0.35,1}
\definecolor{filllightblue}{rgb}{0.75,0.85,1}
\definecolor{fillgreen}{rgb}{0.2,0.7,0.2}
\definecolor{filllightgreen}{rgb}{0.65,0.95,0.65}
\definecolor{fillgreenbox}{rgb}{0.65,0.95,0.65}
\definecolor{fillpurple}{rgb}{0.7,0.4,0.74}
\definecolor{redRPI}{rgb}{0.839,0,0.1098}%RPI red
\def\math#1{$#1$}
\def\mand#1{$$#1$$}
\def\mymathskip{5.25pt}
\def\mandc#1{\mand{\abovedisplayskip=\mymathskip plus 1pt minus 1pt%
\abovedisplayshortskip=0pt plus 1pt minus 1pt%
\belowdisplayskip=\mymathskip plus 1pt minus 1pt%
\belowdisplayshortskip=0pt plus 1pt minus 1pt%
#1}}
\def\mldc#1{\mld{\abovedisplayskip=\mymathskip plus 1pt minus 1pt%
\abovedisplayshortskip=0pt plus 1pt minus 1pt%
\belowdisplayskip=\mymathskip plus 1pt minus 1pt%
\belowdisplayshortskip=0pt plus 1pt minus 1pt%
#1}}
\def\mld#1{\begin{equation}
#1
\end{equation}}
\def\eqan#1{\begin{eqnarray*}
#1
\end{eqnarray*}}
\def\eqar#1{\begin{eqnarray}
#1
\end{eqnarray}}
\def\frac#1#2{{#1\over #2}}
\DeclareSymbolFont{AMSb}{U}{msb}{m}{n}
\DeclareMathSymbol{\Exp}{\mathord}{AMSb}{"45}
\DeclareMathSymbol{\Prob}{\mathord}{AMSb}{"50}
\def\cl#1{{\cal #1}}
\newcommand{\given}{\mid}
\def\floor#1{
\mathchoice
{\left\lfloor\,#1\,\right\rfloor}
{\big\lfloor\,#1\,\big\rfloor}
{\lfloor\,#1\,\rfloor}
{\lfloor\,#1\,\rfloor}
}
\def\r#1{{\eqref{#1}}}
\newcounter{rmnum}
\def\RN#1{\setcounter{rmnum}{#1}\uppercase\expandafter{\romannumeral\value{rmnum}}}
\def\rn#1{\setcounter{rmnum}{#1}\expandafter{\romannumeral\value{rmnum}}}
\definecolor{shadecolor}{gray}{.85}%
\definecolor{tintedcolor}{gray}{.8}%
\gdef\reallynopagebreak{\par\nopagebreak\@nobreaktrue}}
\providecommand\remove[1]{}
\newcommand{\vbar}{\ensuremath{\bar v}}
\newcommand{\effect}{\ensuremath{\textsc{eff}_1}}
\newcommand{\sideeffect}{\ensuremath{\textsc{eff}_0}}
\newcommand{\ite}{\textsc{ite}}
\newcommand{\itehat}{\widehat{\textsc{ite}}}
\newcommand{\att}{\textsc{att}}
\providecommand{\remove}[1]{}
\newcolumntype{C}{>{\centering\arraybackslash}m{\wid}}
\newcommand{\bfit}[1]{\textbf{\textit{#1}}}
\newcommand{\wid}{0.3\columnwidth}
\newcommand{\addpic}[1]{\includegraphics[width=\wid]{#1}}
\newcommand{\pKK}{./}
\DeclareMathSymbol{\Prob}{\mathbin}{AMSb}{"50}
\DeclareMathSymbol{\Exp}{\mathbin}{AMSb}{"45}
\newtheorem{theorem}{\bf Theorem}[section]
\newtheorem{lemma}[theorem]{Lemma}
\title{Consistent Causal Inference of Group Effects in Non-Targeted Trials with
  Finitely Many Effect Levels}
\author{Georgios Mavroudeas\\
 mavrog2@rpi.edu\\
Computer Science\\
Rensselaer Polytechnic Institute\\
110 8th Street, Troy, NY 12180, USA
  \and 
Malik Magdon-Ismail\\
 magdon@cs.rpi.edu\\
Computer Science Department\\
Rensselaer Polytechnic Institute\\
110 8th Street, Troy, NY 12180, USA
  \and 
Kristin P. Bennett\\
bennek@rpi.edu\\
Mathematical Sciences \& Computer Science\\
Rensselaer Polytechnic Institute\\
110 8th Street, Troy, NY 12180, USA
  \and 
Jason Kuruzovich\\
kuruzj@rpi.edu\\
Lally School of Management\\
Rensselaer Polytechnic Institute\\
110 8th Street, Troy, NY 12180, USA
}
\begin{document}

\maketitle

\begin{abstract}
  \noindent
  A treatment may be appropriate for some group (the ``sick" group)
  on whom it has a positive effect, but it can also have a detrimental effect
  on subjects from another group (the ``healthy" group).
  In a non-targeted trial both sick and healthy subjects may be treated,
  producing heterogeneous effects within the treated group.
  Inferring the correct treatment effect on the sick population is then 
  difficult, because the effects on the different groups get tangled.
  We propose an efficient nonparametric approach to estimating the
  group effects, called {\bf PCM} (pre-cluster and merge).
  We prove its asymptotic consistency in a general setting and
  show, on synthetic data, 
  more than a 10x improvement in accuracy over existing state-of-the-art.
  Our approach applies more generally to consistent
  estimation of functions with a finite range.
\end{abstract}

\section{Introduction}
\label{sec:intro}

%\jason{I wrote up a bit more contextual framing. Happy to fill out the citations if we want to go in this directions.}

%Machine learning is powerful for efficient 
%low cost
%causal effect estimation
%\citep{pearl2019seven}. 
A standard approach to causal effect estimation is 
the targeted 
randomized controlled 
trial (RCT), see
\cite{greenland1990randomization,liberati1986quality,pearl2019seven,rosenbaum2007interference,wager2018estimation}.
To test a treatment's effect on a sick population,
subjects are recruited and admitted into the trial based on 
eligibility criteria designed to identify sick subjects.
The trial subjects are then randomly split into a treated group that receives the treatment and a control group that receives the best alternative treatment (or a placebo). 
%randomly split the eligible subjects, giving half of 
%them the treatment and the other
%half the previous best treatment (the placebo).
``Targeted" means only sick individuals are admitted
into the trial via the eligibility criteria, with the implicit assumption that only a single treatment-effect is to be estimated.
This ignores the possibility of  
subgroups among the treated population with heterogeneous treatment
effects.  
Further, one often does not have the luxury of a targeted
RCT. For example, eligibility criteria for admittance to the trial may not unambiguously identify 
sick subjects, or one may not be able to control who gets into the trial. 
When the treatment is not exclusively applied on sick subjects, we say the trial is non-targeted and 
new methods are needed to extract the treatment effect on the sick,
see \cite{zhang2021subgroup}. 
Non-targeted trials are
the norm whenever subjects self-select into an intervention,
which is often the case
across domains stretching from healthcare to advertising
(see for example \cite{malik235}).
We propose a nonparametric approach to causal inference in non-targeted trials,
based on a pre-cluster and merge strategy.

Assume a population is broken into \math{\ell} groups with different expected treatment effects in each group. Identify each group with the level of its treatment effect, so there are effect levels \math{c=0,1,\ldots,\ell-1}. For example, a population's subjects
can be healthy, \math{c=0}, or sick, \math{c=1}.
We use the Rubin-Neyman potential outcome framework \citep{rubin2005causal}. 
A subject is a tuple \math{s=(x,c,t,y)}  sampled from a distribution \math{D}, where \math{x\in[0,1]^d}
is a feature-vector such as [age, weight], \math{c} indicates the subject's level, \math{t} indicates the subjects treatment cohort, 
and \math{y} is the observed outcome.
The observed outcome is one of two potential outcomes,
\math{v} if treated or \math{\vbar} if not treated. 
We 
consider strongly ignorable trials: 
given \math{x}, the propensity to treat is
strictly between 0 and 1 and
the potential outcomes \math{\{v,\vbar\}} depend only on \math{x}, independent of \math{t}.
In a strongly ignorable trial, one can use the features to 
identify counterfactual controls for estimating effect.
The effect level~\math{c} is central to the scope
of our work. Mathematically, \math{c} is a hidden effect
modifier which determines the distribution of the potential outcomes
(\math{c} is an unknown and possibly complex function of \math{x}). 
The effect level~\math{c} partitions the
feature space into subpopulations with
different effects.
One tries to design the eligibility criteria for the trial to
ensure that the propensity to treat is non-zero only for subjects in one level. What to do when the eligibility criteria allow more than one level into the trial is exactly the
problem we address. Though our work applies to a general number of levels, all the main ideas can be illustrated with just two levels,
\math{c\in\{0,1\}}. For the sake of concreteness,
we denote these two levels healthy and sick.

A trial samples \math{n} subjects,
\math{s_1,\ldots,s_n}. If 
subject \math{i} is treated, \math{t_i=1} and the observed outcome 
\math{y_i=v_i}, otherwise \math{t_i=0}, and the observed
outcome is \math{\vbar_{i}} (consistency).
The
treated group is \math{\cl T=\{i\given t_i=1\}}, the control
group is \math{\cl C=\{i\given t_i=0\}}, and the sick
group is \math{\cl S=\{i\given c_i=1\}}.
The task is to estimate
if the treatment works on the sick, and if there is any detriment to
the healthy,
\eqar{
\effect&=&\Exp_D[v-\vbar\given c=1]\\
\sideeffect&=&\Exp_D[v-\vbar\given c=0].\nonumber
}
Most prior work estimates \effect{} using the average treatment effect for the treated, the \att~\citep{abdia2017propensity},
\mld{
\att=
\text{average}_{i\in\cl T}(v_i)
-
\text{average}_{i\in\cl T}(\vbar_i),
\label{eq:att}
}
which assumes all treated subjects are sick.
There are several complications with this approach.
\begin{enumerate}[label={(\roman*)},itemsep=3pt,topsep=5pt,leftmargin=19pt]
\item Suppose a subject is treated with probability
\math{p(x,c)}, the propensity to treat.
For a non-uniform propensity to treat,
the treated group has a selection bias, and
\att{} is a biased estimate of~\effect.
Ways to address this bias 
include inverse propensity weighting \citep{rosenbaum1983central},
matched controls \citep{abdia2017propensity}, and learning the outcome 
function \math{y(x,t)}, see for example \cite{alaa2017deep,athey2019generalized, hill2011bayesian,kunzel2019metalearners, vansteelandt2014regression, wager2018estimation}.
Alternatively, one can simply ignore this bias and accept that
\att{} is estimating \math{\Exp[v-\vbar\given t=1]}.

\item The second term on the RHS in \r{eq:att} can't be computed
because
we don't know the counterfactual~\math{\vbar} for 
treated subjects.
 Much
of causal inference deals with accurate unbiased estimation of 
\math{\text{average}_{i\in\cl T}(\vbar_i)}
\citep{bang2005doubly, hainmueller2012entropy}. Our goal is not
counterfactual estimation. Hence, in our experiments, we use off-the-shelf counterfactual estimators.

\item (\emph{Focus of our work}) The trial is non-targeted and some (often most) treated subjects are healthy.

\end{enumerate}
To highlight the challenge in (\rn{3}) above, 
consider a simple case with uniform propensity to treat,
\math{p(x,c)=p}. Conditioning on at least one treated subject,
\mand{\Exp[\att]=\Prob[\text{sick}]\times\effect+
\Prob[\text{healthy}]\times\sideeffect.}
The \att{} is a mix of effects and is therefore biased 
when the treatment effect is heterogeneous across subpopulations. 
In many settings, for example healthcare, \math{\Prob[\text{sick}]\ll\Prob[\text{healthy}]} and the bias is extreme, rendering \att{} useless.
Increasing the number
of subjects won't resolve this bias. State-of-the-art causal inference packages
provide methods to compute \att{}, specifically aimed at accurate
estimates of the counterfactual \math{\text{average}_{i\in\cl T}(\vbar_i)}
\citep{econml, sharma2020dowhy}. These packages
suffer from the mixing bias above. We propose a fix which
can be used as an add-on to these packages. That is, we do not provide
better methods for counterfactual estimation. We provide a new tool that
uses methods from counterfactual estimation
to provably extract the correct group/sub-population effects.

\paragraph{Our Contribution.}
Let us first emphasize that we do not contribute to counterfactual
estimation, one of the grand challenges in causal inference. That is,
we assume for each treated individual \math{i}, an unbiased estimate
of the counterfactual \math{\vbar_i} is available. Our contribution is
to correctly identify effects when multiple effect-levels are present
in the treated population.
Our main result is an asymptotically consistent distribution independent
 algorithm 
 to extract the correct effect levels and associated subpopulations in non-targeted trials, when the number of effect-levels is \emph{finite} but
 \emph{unknown}, a setup which occurs frequently in practice.
 Our main result is Theorem~\ref{theorem:main} which makes
 a claim about our PCM algorithm in Section~\ref{sec:theory-algorithm}.
 Assume a non-targeted trial with \math{\ell} effect levels and
 a treated group with \math{n} subjects sampled i.i.d.
 from an \emph{unknown} distribution \math{D}.
Our algorithm identifies \math{\hat\ell} effect-levels with estimated
expected effect \math{\hat\mu_c} in level \math{c}, and assigns each subject \math{s_i} to a level \math{\hat c_i} which,
under mild technical conditions, satisfies:
\begin{theorem}\label{theorem:main}
All of the following hold with probability \math{1-o(1)}:
\begin{enumerate}[label={(\arabic*)},itemsep=3pt,topsep=5pt]
\item \math{\hat\ell=\ell}, i.e., the correct number of effect levels \math{\ell} is identified.
\item \math{\hat\mu_c=\Exp[v-\bar v\given c]+o(1)}, i.e., the effect at each level is estimated accurately.
\item The fraction of subjects assigned the correct effect level is \math{1-o(1)}. The effect level 
\math{\hat c_i} is correct
if \math{\mu_{\hat c_i}} matches,
to within \math{o(1)}, the expected treatment effect for the subject.
\end{enumerate}
\end{theorem}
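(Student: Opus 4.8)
The plan is to decompose the argument along the two phases of PCM---pre-clustering and merging---while tracking a single scale parameter (the pre-cluster granularity) that governs a bias--variance tension. Write the distinct true effect levels as $\mu_0<\mu_1<\cdots<\mu_{\ell-1}$ and let $\Delta=\min_{c\ne c'}|\mu_c-\mu_{c'}|>0$ be the minimum gap between them. The ``mild technical conditions'' should guarantee that $\Delta$ is bounded away from $0$ and that the regions $\{x: c(x)=c\}$ partitioning $[0,1]^d$ have a boundary of negligible mass at fine scales. I would abstract the pre-clustering as producing cells of diameter $h=h(n)\to 0$ and run the whole argument for a sequence $h=n^{-\alpha}$ with $\alpha\in(0,1/d)$ chosen so that simultaneously $h\to 0$ (needed for purity) and the expected per-cell count $n h^d\to\infty$ (needed for accuracy).

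First I would prove a \emph{purity} lemma: since $c$ is constant on each region of the partition, a cell can contain two effect levels only if it meets a boundary between regions, and under the boundary-regularity condition the union of such cells has $D$-measure $O(h)\to 0$. A Hoeffding bound on the count of subjects landing in this boundary region then shows that the fraction of subjects in impure cells is $o(1)$ with probability $1-o(1)$. Next a \emph{concentration} lemma: within a pure cell, PCM's estimate is the average of the within-cell $\itehat_i=v_i-\hat{\bar v}_i$, and using the assumed unbiasedness of the counterfactual estimates together with a Hoeffding bound over the $\approx nh^d\to\infty$ points in the cell, this average lies within $o(1)$ of the true level $\mu_c$ of that cell. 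A union bound over the polynomially many ($n^{\alpha d}$) cells keeps both statements uniform with probability $1-o(1)$.

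Given these two lemmas, the \emph{merge} step is elementary. All pure-cell estimates fall into $\ell$ tight clusters on the real line, each of width $o(1)$ and centered within $o(1)$ of a distinct $\mu_c$, while estimates from different levels are separated by at least $\Delta-o(1)$. Running the merge with any threshold in the window $(o(1),\Delta)$ therefore recovers exactly $\ell$ groups, giving claim (1); the merged group means are averages of estimates all within $o(1)$ of a common $\mu_c$, giving claim (2). For claim (3), every subject in a pure cell inherits the merged level matching its own $\mu_c$ to within $o(1)$, and by the purity lemma only an $o(1)$ fraction of subjects lie in impure cells, so the misassigned fraction is $o(1)$.

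The main obstacle I expect is making the purity--accuracy tension quantitative and distribution-free at once: the same scale $h$ must shrink fast enough that the boundary mass $O(h)$ and the impure-cell fraction vanish, yet slowly enough that every cell carrying non-negligible mass retains $nh^d\to\infty$ points so its per-cell effect estimate concentrates. Pinning down the regularity assumption on the level-boundary and the control on $D$'s density that make both bounds hold without committing to a particular $D$---and then exhibiting a single exponent $\alpha$ that threads both requirements, with the counterfactual estimator's own $o(1)$ bias feeding cleanly into the per-cell error---is where the real work lies; the concentration and merge steps are then routine.
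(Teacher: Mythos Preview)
Your pre-clustering analysis---the purity lemma via boundary measure and the concentration lemma via per-cell Hoeffding plus a union bound over polynomially many cells---is essentially the paper's Lemmas~3.1--3.3, and your choice of scale $h=n^{-\alpha}$ with $\alpha\in(0,1/d)$ includes the paper's $\varepsilon\sim n^{-1/2d}$. That part is fine.

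The gap is in the merge step, which you call ``elementary'' but is in fact where the paper's main technical work sits (Lemma~3.4). Two issues. First, impure cells do not disappear: their $\att$'s are convex combinations of the $\mu_c$ and can land anywhere in $[\mu_0,\mu_{\ell-1}]$. A linkage- or gap-threshold merge of the kind your phrase ``any threshold in the window $(o(1),\Delta)$'' suggests can be bridged by even a handful of such cells sitting between two levels, collapsing them into one group. The paper avoids this by using the \emph{average} $k$-means clustering error over all cells: impure cells, being an $O(\varepsilon^\rho)$ fraction, contribute only $o(1)$ to $\mathrm{err}(\ell)$, while a pigeonhole argument forces $\mathrm{err}(\ell-1)\ge(\beta/\Delta)(\kappa/2)^2(1-o(1))$, a constant. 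Second, you must pick the threshold without knowing $\Delta$ (the paper's $\kappa$); ``any threshold in $(o(1),\Delta)$'' is not an algorithm. The paper exhibits a specific parameter-free function $\tau(n)=\log n/n^{1/2d}$ that asymptotically separates the constant lower bound on $\mathrm{err}(\ell-1)$ from the $O(\varepsilon^\rho)$ upper bound on $\mathrm{err}(\ell)$, and sets $\hat\ell$ to the smallest $k$ with $\mathrm{err}(k)\le\tau(n)$.

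A smaller point on claim~(2): your sentence ``the merged group means are averages of estimates all within $o(1)$ of a common $\mu_c$'' is not quite right, since whichever impure cells get assigned to group $c$ contaminate that average. The paper handles this explicitly by bounding the shift as $(n_{\mathrm{impure}}\mu+n_c\mu_c)/(n_{\mathrm{impure}}+n_c)=\mu_c+O(\varepsilon^\rho)$. You locate the hard part in choosing the exponent $\alpha$; the paper locates it in making the merge robust to impure cells and parameter-free, and that is the piece your outline is missing.
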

For the formal assumptions needed to prove Theorem~\ref{theorem:main},
see Section~\ref{sec:theory-consistency}.
Parts (1) and (2) say the algorithm extracts the correct number of levels and their expected effects.
Part (3) says the correct subpopulations for each level are extracted. Knowing the correct subpopulations is
useful for post processing, for example to 
understand the effects in terms of the features.
The main challenge in achieving the results in
Theorem~\ref{theorem:main} is that the feature-values of the
subpopulation corresponding to a given expected effect (an effect level)
may be spread over
the feature space in an unknown way. That is, there are multiple clusters
in the feature space corresponding to a given expected effect. Hence it is
necessary to first find these clusters (the pre-clustering step in our
algorithm) and then merge all the clusters of a single effect-level.
Since the observed effects conditioned on an
effect-level come from a continuous distribution it now becomes challenging
to identify which clusters belong to which effect level, or even
just how many effect levels there are and the corresponding expected
effect in each level.

The above challenges are all addressed in 
our algorithm satisfying Theorem~\ref{theorem:main} given in Section~\ref{sec:theory-algorithm}.
We use
an unsupervised pre-cluster and merge strategy
which reduces the task of estimating the effect-levels to a 1-dimensional
optimal clustering problem that provably extracts the correct
levels asymptotically as 
\math{n\rightarrow\infty}. 
Our algorithm assumes an unbiased estimator of counterfactuals,
for example some established method~\citep{econml, sharma2020dowhy}. 
In practice, one estimates conterfactuals by identifying
control subjects after controlling
for confounders.
If unbiased counterfactual estimation is not possible, then
causal effect analysis becomes a challenge due to this counterfactual
bias,
irrespective of whether multiple effect-levels are in the treated group. 
Our primary goal is untangling the heterogeneous effect levels.
We use 
an off-the-shelf gradient boosting algorithm to get
counterfactuals in our experiments~\citep{econml}. 
We demonstrate our algorithm's performance
on synthetic data, in particular to show that practice
matches the theory.  The goal of our
experiments is to showcase our algorithm's capabilities in
comparison to alternatives.
On a more practical side, our algorithm has been used to extract
group effects in real medical data~\citep{malik235}, which is not the focus
here. We do mention that one can relax the requirement of unbiased 
counterfactuals if a large number of control subjects with measured
outcomes is available. One can use our algorithm as is, by setting
the effect in a cluster as the difference in average outcome
between that cluster's treated and control subjects.

Subpopulation effect-analysis is a special case of
heterogeneous treatment effects/conditional average
treatment effects (HTE/CATE), see
\citep{kunzel2019metalearners,shalit2017estimating,wager2018estimation,chernozhukov2022generic}.
Hence, in our experiments,  we compare with X-Learner, a
state-of-the art algorithm for HTE~\citep{kunzel2019metalearners}.
We also compare with the Bayes optimal prediction of effect-level which uses
knowledge of the effect distributions (unrealizable in practice).  
In comparison to X-Learner, our algorithm extracts visually better subpopulations, and has an accuracy that is
more than \math{10\times} better for estimating per-subject expected effects.
Note, HTE algorithms do not extract subpopulations with effect-levels. They 
predict effect given the features \math{x}. One can, however,
try to infer subpopulations from predicted effects as in the GATES method in
\cite{chernozhukov2022generic}. However, as mentioned in
\cite{chernozhukov2022generic}, proving consistency
is difficult and would require strong assumptions because consistency of
the ML method for the much harder CATE estimation is needed.
We go directly after the subpopulation effect-levels.
Our algorithm also significantly outperforms Bayes optimal
based on individual effects, which suggests
that some form of pre-cluster and merge strategy is necessary.
This need for some form of clustering has been independently
observed in \cite[chapter 4]{kim2020causal}
who studies a variety of clustering approaches in a
non-distribution independent setting with a known number of levels.

\section{Algorithm: Pre-Cluster and Merge For Subpopulation Effects (PCM)}
\label{sec:theory-algorithm}

Our algorithm uses a nonparametric 
pre-cluster and merge strategy that achieves asymptotic consistency without any user-specified hyperparameters. The inputs are the
\math{n} subjects
\math{s_1,\ldots,s_n}, where
\mandc{\{s_i\}_{i=1}^n=\{(x_i,t_i,y_i,\bar y_i)\}_{i=1}^n.}
Note, both the factual \math{y_i} 
and counterfactual \math{\bar y_i} are inputs to the algorithm,
where
\math{y_i} and
\math{\bar y_i} are random draws from the distributions of
\math{v_i} and \math{\vbar_i}.
In practice, one measures \math{y_i} and one
gets an unbiased estimate of
\math{\bar y_i} from a counterfactual estimation
algorithm that uses the observed features \math{x_i} to control for
possible confounders.
For our
demonstrations we use an out-of-the-box gradient boosting regression algorithm  
to estimate counterfactuals
\citep{friedman2002stochastic, pedregosa2011scikit}.
Inaccuracy in counterfactual estimation will be accommodated in our analysis.
The need to estimate counterfactuals does
impact the algorithm in practice,
due to an asymmetry in 
most trials: the treated population is much smaller
than the untreated controls. Hence, one might be able to 
estimate counterfactuals for the treated population but not 
for the controls due to lack of coverage by the (small) treated population. In this case, our algorithm is only run on the treated population.
It is convenient to define individual treatment effects 
\math{\ite_i=(y_i-\bar y_i)(2t_i-1)}, where \math{y_i} is the observed factual and
\math{\bar y_i} the counterfactual
(\math{2t_i-1=\pm 1} ensuring that the effect
computed is for treatment versus no treatment).
There are five main steps.
\begin{center}
\fbox{
\parbox{0.9\linewidth}{
\begin{algorithmic}[1]\itemsep3pt
\STATE {\sc[pre-cluster]} Cluster the \math{x_i} into \math{K\in O(\sqrt{n})} clusters
\math{Z_1,\ldots,Z_{K}}.
\STATE Compute \att{} for each cluster \math{Z_j},
\math{\att_j=\text{average}_{x_i\in Z_j}\ite_i.\remove{\vspace*{-10pt}}}
\STATE {\sc[merge]} Group the \math{\{\att_j\}_{j=1}^K} into \math{\hat\ell} effect-levels, merging the clusters at each level to get subpopulations
\math{X_0, X_1,\ldots, X_{\hat\ell-1}}. (\math{X_c} is the union of all clusters at level \math{c}.)
\STATE Compute subpopulation effects
\math{\hat\mu_c=\text{average}_{x_i\in X_c} \ite_i}, for \math{c=0,\ldots,\hat\ell-1}.
%and return \math{(C_0,\effect_0), (C_1,\effect_1), \ldots}.
\STATE Assign subjects to effect levels,
update the populations \math{X_{c}} and expected effects \math{\hat\mu_{c}}.
\end{algorithmic}
}}
\end{center}
We now elaborate on the intuition and details for each step in the
algorithm.
\begin{enumerate}[nolistsep,topsep=5pt,itemsep=3pt,label={{\bf Step \arabic*.}}, itemindent=28pt, leftmargin=12pt]
\item 
The clusters in the pre-clustering step play two roles. The first is to denoise individual 
effects using in-cluster averaging. The second is to group like with
like, that is clusters should be homogeneous, containing only
subjects from one effect-level. This means each 
cluster-\att{} will accurately estimate a single level's effect (we do not know
the level). We allow for any clustering algorithm. However, our theoretical
analysis (for simplicity) uses a specific algorithm,
box-clustering, based on an \math{\varepsilon}-net of the feature
space. One could also use a standard clustering algorithm such as \math{K}-means.
We compare box-clustering with \math{K}-means in the appendix.

\item Denoise individual effects using in-cluster averages.
Assuming clusters are homogeneous, a cluster's \att{} approximates
some level's effect. In essence, each cluster is a targeted trial.
We don't know which effect level the cluster is targeting, though.

\item 
  Assuming the effects in different levels are well separated, this
  separation is present in the cluster-\att{}s,
  provided clusters are homogeneous.
  Hence, we identify and merge clusters with similar effects
  into subpopulations.
  Two tasks must be solved. Finding the number of subpopulations
  \math{\hat\ell} and then
  optimally grouping the clusters into
  \math{\hat\ell} subpopulations.
  To find the subpopulations, 
  we use \math{\hat\ell}-means with squared 1-dim clustering error.
  Our algorithm sets \math{\hat\ell}
  to achieve an \math{\hat\ell}-means error at most
  \math{\log n/n^{1/2d}}. So,
  \eqan{
    \text{optimal 1-dim clustering error}(\hat\ell-1)&>&\log n/n^{1/2d}
    \\
    \text{optimal 1-dim clustering error}(\hat\ell)&\le&\log n/n^{1/2d}
  }
  Simultaneously finding \math{\hat\ell} and optimally partitioning the
  clusters into \math{\hat\ell} groups
  can be solved using a standard dynamic programming algorithm in \math{O(K^2\hat\ell)} time using \math{O(K)} space~\citep{wang2011ckmeans}.
  Note, while our algorithm  identifies the number of effect 
  levels, if it is known that only two distinct
  subpopulations exist, sick and healthy,
  then \math{\hat\ell} can be hard-coded to 2.

\item Assuming each cluster is homogeneous, merging the
  clusters with similar effects found in step 3 will form subpopulations
  that are near-homogeneous, containing subjects from
  just one effect-level. Hence, the subpopulation-\att{}s will be accurate
estimates of the effects at each level.

\item Each subject \math{x_i} is implicitly 
assigned a level \math{\hat c_i} based on the
subpopulation \math{X_{c}} to which it belongs. However, we can do
better. By considering the \math{\sqrt{n}} nearest neighbors to 
\math{x_i}, we can obtain a smoothed effect for \math{x_i}. We use this smoothed effect to place \math{x_i} into the subpopulation whose effect
matches best, hence placing \math{x_i} into a level.
Running this update for all \math{n} subjects is costly,
needing sophisticated data structures to reduce the expected run time below
\math{O(n^2)}. As an alternative, we center an \math{(1/n^{1/2d})}-hypercube
on \math{x_i} and smooth \math{x_i}'s effect using the average effect over points in this hypercube. This approach requires 
\math{O(n\sqrt{n})} run time to obtain the effect-level for all 
subjects, significantly better than \math{O(n^2)} when \math{n} is large.
Once the effect-levels for all subjects are obtained, one can update the 
subpopulations 
\math{X_c} and the corresponding effect-estimates \math{\hat\mu_c}.
We used this last approach in our experiments.
\end{enumerate}
The run time of the algorithm is \math{O(n\ell+n\sqrt{n})} (expected and with high probability) and the
%KPB dividing into two paragrphas one on steps, one on post processing
output is nearly homogeneous subpopulations which can now be post-processed.
An example of useful post-processing is a feature-based explanation of the
subpopulation-memberships as in ~\citep{malik235}.
Note that we still do not know which subpopulation(s) are the sick ones,
hence we cannot say which is the effect of the treatment on the ``sick''. A post-processing oracle would make this determination. For example, a doctor in a medical trial could identify the sick groups from subpopulation-demographics.

{\bf Note.} The optimal 1-d clustering can be done directly on the
smoothed \ite{}s from the \math{(1/n^{1/2d})}-hypercubes
centered on each \math{x_i}, using the same thresholds in step 3. One still gets asymptotic consistency, however the price is an increased run time to \math{O(n^2\ell)}. This 
is prohibitive for large \math{n}.

\paragraph{Extension to Accomodate No Counterfactual Estimator.}
If an unbiased estimator of counterfactuals is not available but untreated
controls with measured outcomes
are available, our pre-cluster and merge methodology can still be used.
Now, within each cluster, one estimates the average treatment outcome and the
average control outcome. Define the \math{\att_j} for
cluster \math{j} as the difference between these two average outcomes. We would
need additional technical assumptions to ensure that this difference
of average outcomes converges to the cluster \math{\att_j}, but thereafter
all the same results hold.

\section{Asymptotic Consistency: Proof of Theorem~\ref{theorem:main}}
\label{sec:theory-consistency}

To prove consistency, we must make our assumptions precise.
In some cases the assumptions are stronger than needed, for simplicity of exposition.

\begin{enumerate}[nolistsep,topsep=5pt,itemsep=3pt, label={{\bf A\arabic*.}}, itemindent=14pt, leftmargin=16pt]
\item 
The feature space \math{X} is 
\math{[0,1]^d} and the marginal feature-distribution is 
uniform, \math{D(x)=1}. More generally, \math{X} is compact and \math{D(x)} is bounded, \math{0<\delta\le D(x)\le\Delta} (can be relaxed). We assume the
features of each subject are sampled i.i.d. from \math{D}.

\item The level \math{c} is an unknown
function of the feature
\math{x}, \math{c=h(x)}. The potential effects \math{(v,\vbar)}
depend only on \math{c}. Conditioning on \math{c},
effects are well separated.
Let 
\math{\mu_c=\Exp_D[v-\vbar|c]}. Then,
\mandc{
|\mu_c-\mu_{c'}|\geq\kappa
\qquad\text{for \math{c\not=c'}}
}
\item
Define the subpopulation for level
\math{c} as \math{X_c=h^{-1}(c)}. Each subpopulation
has positive measure, \math{\Prob[x\in X_c]=\beta_c\ge\beta>0}.

\item For a treated subject \math{x_i} with outcome \math{y_i},
  one can get
  an unbiased estimate of the counterfactual \math{\bar y_i}.
  Effectively, an unbiased
estimate of the individual treatment effect
\math{\ite{}_i=y_i-\bar y_i} is available. 
%Note that by making stronger assumptions
%on the outcomes \math{y_i,\bar y_i} one can relax the need for counterfactual estimates. 
Any causality analysis 
requires estimates of counterfactuals.
In practice, one typically gets counterfactuals from
untreated subjects
after controlling for confounders~\citep{econml, sharma2020dowhy}.

\item Sample 
averages concentrate. Essentially, the estimated
\ite{}s are independent. This is true in practice because the subjects are independent and the counterfactual estimates use a predictor learned from the independent control population.  
For \math{m} i.i.d. subjects, let the average of the estimated \ite{}s be \math{\hat\nu}
and the expectation of this average be \math{\nu}.
Then, 
\mandc{
\Prob[|\hat\nu-\nu|>\epsilon]\leq e^{-\gamma m\epsilon^2}.
}
The parameter \math{\gamma>0} is related to distributional properties of the estimated \ite{}s. Higher variance \ite{} estimates result
in \math{\gamma} being smaller. Concentration is a mild technical assumption
requiring the estimated effects to be unbiased well behaved random variables, to which a central limit theorem applies.
Bounded effects or normally distributed effects suffice for concentration.
Note also that \math{\gamma} will be impacted by the
variance in the counterfactual estimation.

\item The boundary between the subpopulations 
has small measure. Essentially we require that
two subjects that have very similar features will belong to the same level with high probability (the function \math{c=h(x)} is
not a ``random'' or fractal function). 
Again, this is a mild technical assumption which is taken for granted in practice. Let us make the assumption more precise. 
%A point \math{x} is a boundary point if every ball of positive radius centered on 
%\math{x} contains points from more than level.
Define an \math{\varepsilon}-net to be a subdivision of \math{X} into \math{(1/\varepsilon)^d} disjoint
hypercubes of side \math{\varepsilon}. A hypercube of an \math{\varepsilon}-net is impure if it contains points from multiple subpopulations. Let \math{N_{\text{impure}}} be the number of impure hypercubes in an \math{\varepsilon}-net. Then 
\math{\varepsilon^d N_{\text{impure}}\le \alpha \varepsilon^\rho},
where \math{\rho>0} and \math{\alpha} is a constant.
Note, this assumption is equivalent to assuming that 
\math{d-\rho} is the boxing-dimension
of the boundary. The norm in practice is  \math{\rho=1}. 

\item We use box-clustering for the first step in the algorithm. Given \math{n}, define \math{\varepsilon(n)=1/\floor{n^{1/2d}}}. All points in a
hypercube of an \math{\varepsilon(n)}-net form a cluster. Note that
the number of clusters is approximately \math{\sqrt{n}}. The 
expected number of points in a cluster is 
\math{n\varepsilon(n)^d\approx \sqrt{n}}.
\end{enumerate}

\paragraph{Discussion of Assumptions.} Assumptions
{\bf A1-A3 and A6} are benign technical assumptions
that are generally true in
practice. The main practical assumption is
{\bf A4} that counterfactual estimation is possible. The accuracy of any
causal analysis relies on the accuracy of counterfactual estimation,
which is not our focus.
{\bf A5} is a non trivial technical assumption which is usually true in
practice. {\bf A7} is just the simplest clustering algorithm for analysis.
Our results can hold for any other consistent clustering approach, and in
our experiments we find that \math{\sqrt{n}}-means clustering is a
simple algorithm with good performance.

\subsection{Proof of Theorem~\ref{theorem:main}}
\label{sec:theory-consistency-proof}

We prove Theorem~\ref{theorem:main} via a sequence of lemmas.
The feature space $X=[0,1]^d$ is partitioned into levels
\math{X_0,\ldots,X_{\ell-1}}, where \math{X_c=h^{-1}(c)} is the set of points whose level is \math{c}. Define an
\math{\varepsilon}-net that partitions $X$ into 
$N_\varepsilon=\varepsilon^{-d}$ hypercubes of equal volume  $\varepsilon^d$, where $\varepsilon$ is the side-length of the hypercube.
Set \math{\varepsilon=1/\floor{n^{1/2d}}}. Then,  
\math{N_\varepsilon=\sqrt{n}(1-O(d/n^{1/2d}))\sim\sqrt{n}}. Each hypercube in the \math{\varepsilon}-net defines a cluster for the pre-clustering
stage. There are about \math{\sqrt{n}} clusters and, since \math{D(x)} is uniform, there are about \math{\sqrt{n}} points in each cluster.
Index the clusters in the \math{\varepsilon}-net by \math{j\in\{1,\ldots,N_\varepsilon\}} and
define \math{n_j} as the number of points in cluster \math{j}.
Formally, we have,
\begin{lemma}\label{lemma:points}
Suppose $D(x)\ge\delta>0$. Then, 
\math{\Prob[\min_j n_j\ge \frac12\delta\sqrt{n}]>1- \sqrt{n}\exp(-\delta\sqrt{n}/8)}.
\end{lemma}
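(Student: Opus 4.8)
The plan is to treat each cluster count $n_j$ as a Binomial random variable, apply a multiplicative Chernoff lower-tail bound to each, and finish with a union bound over the $\sim\sqrt n$ clusters.

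First I would fix a single cluster $j$, i.e.\ one hypercube of the $\varepsilon$-net with $\varepsilon=1/\floor{n^{1/2d}}$. Since the $n$ feature points are sampled i.i.d.\ from $D$ (assumption {\bf A1}), the count $n_j$ is distributed as $\text{Binomial}(n,p_j)$, where $p_j=\int_{\text{cube}_j}D(x)\,dx$ is the mass $D$ places on that cube. Using the hypothesis $D(x)\ge\delta$ and the fact that each cube has volume $\varepsilon^d=1/N_\varepsilon$, I get $p_j\ge\delta\varepsilon^d=\delta/N_\varepsilon$. Because $N_\varepsilon=\floor{n^{1/2d}}^d\le\sqrt n$, this yields $p_j\ge\delta/\sqrt n$ and hence a mean $\Exp[n_j]=np_j\ge\delta\sqrt n$.

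Next I would apply the multiplicative Chernoff lower-tail bound: for $X\sim\text{Binomial}(n,p)$ with mean $\mu$ and any $\eta\in(0,1)$, $\Prob[X\le(1-\eta)\mu]\le\exp(-\eta^2\mu/2)$. Taking $\eta=\frac12$ and writing $\mu_j=np_j$, the threshold $(1-\eta)\mu_j=\frac12\mu_j\ge\frac12\delta\sqrt n$, so the event $\{n_j<\frac12\delta\sqrt n\}$ is contained in $\{n_j<\frac12\mu_j\}$. Thus $\Prob[n_j<\frac12\delta\sqrt n]\le\exp(-\mu_j/8)\le\exp(-\delta\sqrt n/8)$, where the final inequality uses $\mu_j\ge\delta\sqrt n$ and the monotonicity of $\exp(-\cdot)$.

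Finally I would union-bound over all $N_\varepsilon\le\sqrt n$ clusters: $\Prob[\min_j n_j<\frac12\delta\sqrt n]\le N_\varepsilon\exp(-\delta\sqrt n/8)\le\sqrt n\exp(-\delta\sqrt n/8)$, which rearranges to the stated claim. There is no substantive obstacle here; this is a routine concentration-plus-union-bound argument. The only points needing care are bookkeeping: verifying that $\varepsilon^d=1/N_\varepsilon$ and $N_\varepsilon\le\sqrt n$ so that both the per-cluster mean bound $\mu_j\ge\delta\sqrt n$ and the union-bound prefactor $\sqrt n$ emerge with exactly the stated constants, and choosing $\eta=\frac12$ so that the exponent comes out to precisely $\delta\sqrt n/8$.
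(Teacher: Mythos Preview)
Your proposal is correct and follows essentially the same approach as the paper: bound the per-cube landing probability below by $\delta/\sqrt{n}$ via the volume bound $\varepsilon^d\ge 1/\sqrt{n}$, apply the multiplicative Chernoff lower-tail bound with $\eta=\tfrac12$ to get the $\exp(-\delta\sqrt{n}/8)$ tail, and union-bound over $N_\varepsilon\le\sqrt{n}$ cubes. The paper's proof is the same argument with the same constants.
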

\begin{proof}
Fix a hypercube in the \math{\varepsilon}-net. Its volume is 
\math{\varepsilon^d\ge(1/n^{1/2d})^d=1/\sqrt{n}}.  A point lands in this 
hypercube  with probability at least \math{\delta/\sqrt{n}}. Let \math{Y} be the number of points 
in the hypercube. Then, \math{Y} is a sum of \math{n} independent Bernoullis and 
\math{\Exp[Y]\ge\delta\sqrt{n}}. By a Chernoff bound~\cite[page 70]{motwani1996randomized},
\mand{
\Prob[Y<\delta\sqrt{n}/2]\le\Prob[Y<\Exp[Y]/2]<\exp(-\Exp[Y]/8)\le\exp(-\delta\sqrt{n}/8).
}
By a union bound over the \math{N_\varepsilon} clusters,
\mandc{
\Prob[\text{some cluster has fewer than \math{\delta\sqrt{n}/2} points}]
<
N_\varepsilon\exp(-\delta\sqrt{n}/8)
\le
\sqrt{n}\exp(-\delta\sqrt{n}/8).
}
The lemma follows by taking the complement event.
\end{proof}
For uniform \math{D(x)}, \math{\delta=1} and every cluster has at least 
\math{\frac12\sqrt{n}} points with high probability. We can now condition on this 
high probability event that every cluster is large. This means that a cluster's 
\att{} is an average of many \ite{}s, which by {\bf A5} concentrates at the 
expected effect for the hypercube.
Recall that the expected effect in level \math{c} is defined as 
$\mu_c = \Exp_D[v-\vbar|c]$. We can assume, w.l.o.g., that \math{\mu_0<\mu_1\cdots<\mu_{\ell-1}}.  
Define \math{\nu_j} as the expected average effect for points in the hypercube \math{j}
and \math{\att_j} as the average \ite{} for points in cluster \math{j}.
Since every cluster is large, every cluster's \math{\att_j} will be close to its
expected average effect \math{\nu_j}. More formally,
\begin{lemma}\label{lemma:concentrate}
\math{\Prob[\max_j|\att_j-\nu_j|\le 2\sqrt{\log n/\gamma\delta\sqrt{n}}]\ge 1-n^{-3/2}-\sqrt{n}\exp(-\delta\sqrt{n}/8)}.
\end{lemma}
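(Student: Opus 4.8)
The plan is to reduce the lemma to a per-cluster application of the concentration assumption \textbf{A5}, glued together by the large-cluster guarantee of Lemma~\ref{lemma:points} and a union bound over the $N_\varepsilon\sim\sqrt{n}$ clusters. The key observation is that each cluster is just a collection of i.i.d.\ subjects whose features landed in one hypercube, so within a cluster the estimated \ite{}s are i.i.d.\ with mean $\nu_j$ and \textbf{A5} applies directly with sample size $m=n_j$.

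First I would condition on the high-probability event $E$ from Lemma~\ref{lemma:points} that every cluster is large, i.e.\ $\min_j n_j\ge\frac12\delta\sqrt{n}$, which fails with probability at most $\sqrt{n}\exp(-\delta\sqrt{n}/8)$. On $E$, for each fixed cluster $j$ I apply \textbf{A5} with $m=n_j$ and target deviation $\epsilon=2\sqrt{\log n/(\gamma\delta\sqrt{n})}$. Then $\gamma n_j\epsilon^2\ge\gamma\cdot\frac12\delta\sqrt{n}\cdot 4\log n/(\gamma\delta\sqrt{n})=2\log n$, so $\Prob[|\att_j-\nu_j|>\epsilon]\le e^{-2\log n}=n^{-2}$. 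A union bound over the $N_\varepsilon\le\sqrt{n}$ clusters gives that, on $E$, $\max_j|\att_j-\nu_j|>\epsilon$ with probability at most $\sqrt{n}\cdot n^{-2}=n^{-3/2}$.

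Finally I would combine the two failure modes with an outer union bound: either $E$ fails (probability $\le\sqrt{n}\exp(-\delta\sqrt{n}/8)$), or $E$ holds but some cluster deviates (probability $\le n^{-3/2}$). Hence $\Prob[\max_j|\att_j-\nu_j|>\epsilon]\le n^{-3/2}+\sqrt{n}\exp(-\delta\sqrt{n}/8)$, and taking the complement event yields exactly the stated bound.

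The main subtlety to get right is the randomness of the cluster sizes $n_j$, since the bound in \textbf{A5} depends on the sample size $m$. The clean fix is to condition on the cluster sizes (equivalently, work inside $E$) \emph{before} invoking \textbf{A5}, and to note that conditioning on cluster membership preserves the i.i.d.\ structure \textbf{A5} requires: points are assigned to hypercubes independently, and given membership in hypercube $j$ the feature is distributed according to $D$ restricted to that hypercube, so the resulting \ite{}s are i.i.d.\ with the correct mean $\nu_j$. Everything else is routine arithmetic in choosing $\epsilon$ so that the per-cluster exponent is exactly $2\log n$, which is what makes the two-clusters-worth union bound collapse to $n^{-3/2}$.
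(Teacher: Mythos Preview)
Your proposal is correct and follows essentially the same argument as the paper: condition on the large-cluster event from Lemma~\ref{lemma:points}, apply \textbf{A5} per cluster with the chosen $\epsilon$ so that $\gamma n_j\epsilon^2\ge 2\log n$, union-bound over the $N_\varepsilon\le\sqrt n$ clusters, and then combine with the failure probability of the conditioning event via $\Prob[\cl A]\le\Prob[\cl A\mid\cl B]+\Prob[\overline{\cl B}]$. Your extra remark that conditioning on hypercube membership preserves the i.i.d.\ structure needed for \textbf{A5} is a point the paper leaves implicit.
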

\begin{proof}
Conditioning on \math{\min_jn_j\ge\frac12\delta\sqrt{n}} and using {\bf A5}, we have
\mand{\Prob\Big[|\att_j-\nu_j|>2\sqrt{\log n/\gamma\delta\sqrt{n}}\Big|  \min_jn_j\ge{\textstyle\frac12}\delta\sqrt{n}\Big]\le 
\exp(-2\log n)=1/n^2.}
By a union bound,
\math{\Prob[\max_j|\att_j-\nu_j|>2\sqrt{\log n/\gamma\delta\sqrt{n}}\given \min_jn_j\ge{\textstyle\frac12}\delta\sqrt{n}]\le 
N_\varepsilon/n^2.} 
For any events \math{\cl{A},\cl{B}}, by total probability,
\math{
\Prob[\cl{A}]\le\Prob[\cl{A}\given \cl{B}]+\Prob[\overline{\cl{B}}].
}
Therefore,
\mand{
\Prob[\max_j|\att_j-\nu_j|>2\sqrt{\log n/\gamma\delta\sqrt{n}}]
\le
N_\varepsilon/n^2
+
\Prob[\min_jn_j<{\textstyle\frac12}\delta\sqrt{n}]
}
To conclude the proof,
use \math{N_\varepsilon\le\sqrt{n}} and Lemma~\ref{lemma:points}. 
\end{proof}
Lemmas~\ref{lemma:points} and~\ref{lemma:concentrate}
are standard concentration results. The remaining
lemmas to identify the levels and subpopulations at each level is done by
reducing the problem to one of optimal 1-dimensional clustering which
can be solved efficiently.
A hypercube in the \math{\varepsilon}-net is homogeneous if it only contains points
of one level (the hypercube does not intersect the boundary between levels).
Let \math{N_c} be the number of homogeneous hypercubes for level \math{c}
and
\math{N_{\text{impure}}} be the number of hypercubes that are not homogeneous,
i.e., impure.
\begin{lemma}\label{lemma:pure}
\math{N_{\text{impure}}\le \alpha\varepsilon^\rho N_{\varepsilon}}
and
\math{N_c\ge N_{\varepsilon}(\beta/\Delta-\alpha\varepsilon^\rho)}.
\end{lemma}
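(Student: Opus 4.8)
The plan is to treat the two bounds separately; both reduce to elementary counting and volume comparisons that invoke assumptions A6, A3, and A1.

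For the first bound, I would observe that it is essentially a rescaling of assumption A6. The \math{\varepsilon}-net has \math{N_\varepsilon=\varepsilon^{-d}} hypercubes, so multiplying A6's inequality \math{\varepsilon^d N_{\text{impure}}\le\alpha\varepsilon^\rho} through by \math{\varepsilon^{-d}=N_\varepsilon} gives \math{N_{\text{impure}}\le\alpha\varepsilon^\rho N_\varepsilon} immediately. No additional work is needed here, and this bound will also feed into the second part.

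For the second bound I would use a covering argument. First, lower-bound the volume of the subpopulation \math{X_c}: by A3 we have \math{\Prob[x\in X_c]=\beta_c\ge\beta}, and by A1 the density satisfies \math{D(x)\le\Delta}, so \math{\beta\le\beta_c=\int_{X_c}D(x)\,dx\le\Delta\cdot\text{vol}(X_c)}, giving \math{\text{vol}(X_c)\ge\beta/\Delta}. Next, cover \math{X_c} by those hypercubes of the \math{\varepsilon}-net that intersect it. Each such hypercube is either homogeneous for level \math{c}, in which case it lies entirely inside \math{X_c} and contributes its full volume \math{\varepsilon^d}, or it is impure, contributing at most \math{\varepsilon^d} to \math{X_c}; a hypercube homogeneous for any other level \math{c'\ne c} contains no level-\math{c} points and contributes nothing. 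Summing these contributions yields \math{\text{vol}(X_c)\le(N_c+N_{\text{impure}})\varepsilon^d}. Rearranging and using \math{\varepsilon^{-d}=N_\varepsilon} gives \math{N_c\ge\text{vol}(X_c)\,N_\varepsilon-N_{\text{impure}}\ge(\beta/\Delta)N_\varepsilon-\alpha\varepsilon^\rho N_\varepsilon=N_\varepsilon(\beta/\Delta-\alpha\varepsilon^\rho)}, which is the claim.

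The only point requiring care is the direction of the inequality in the covering step: because the homogeneous-for-\math{c} cubes sit inside \math{X_c} while impure cubes overlap it only partially, the correct bound is an \emph{upper} bound on \math{\text{vol}(X_c)} by the total volume of the intersecting cubes, not the reverse. Getting this orientation right, and then substituting the bound on \math{N_{\text{impure}}} established in the first part, is the whole of the subtlety; everything else is a single-line volume comparison, so I do not expect any genuine obstacle.
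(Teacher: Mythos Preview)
Your proposal is correct and follows essentially the same route as the paper. The only cosmetic difference is that the paper bounds the probability \math{\Prob[x\in X_c]\le(N_c+N_{\text{impure}})\Delta\varepsilon^d} directly, whereas you first pass to volume via \math{\text{vol}(X_c)\ge\beta/\Delta} and then bound \math{\text{vol}(X_c)\le(N_c+N_{\text{impure}})\varepsilon^d}; combining your two inequalities yields exactly the paper's single chain, so the arguments are the same.
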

\begin{proof}
{\bf A6} directly implies 
\math{N_{\text{impure}}\le \alpha\varepsilon^\rho N_{\varepsilon}}.
Only the pure level \math{c} or impure hypercubes can contain points in level \math{c}. Using {\bf A3} and 
\math{\varepsilon^d=1/N_\varepsilon}, we have
\mand{\beta\le \Prob[x\in X_c]\le (N_c+N_{\text{impure}})\Delta\varepsilon^d
\le (N_c+\alpha\varepsilon^\rho N_\varepsilon)\Delta/N_\varepsilon.
}
The result follows after rearranging the above inequality.
\end{proof}

The main tools we need are Lemmas~\ref{lemma:concentrate} and \ref{lemma:pure}. 
Let us recap what we have. The cluster \att{}s are close to the 
expected average effect in every hypercube. The number of 
impure hypercubes is an asymptotically negligible fraction of the
hypercubes since \math{\varepsilon\in O(1/n^{1/2d})}. Each level has
an asymptotically constant fraction of homogeneous hypercubes.
This means that almost all cluster \att{}s will be close to a 
level's expected effect, and every level will be well represented.
Hence, if we optimally cluster the \att{}s, with fewer than
\math{\ell} clusters, we won't be able to get clustering error 
close to zero. With at least \math{\ell} clusters, we will be able to get clustering error approaching zero. This is the content of the next lemma, which justifies step 3 in the algorithm. An optimal
\math{k}-clustering of the cluster \att{}s produces 
\math{k} centers \math{\theta_1,\ldots,\theta_k}
and assigns each cluster \math{\att_j}
to a center \math{\theta(\att_j)} so that the average
clustering error 
\math{\text{err}(k)=\sum_{j}(\att_j-\theta(\att_j))^2/N_\varepsilon} is minimized.
Given \math{k}, one can find an optimal \math{k}-clustering in 
\math{O(N_\varepsilon^2k)} time using \math{O(N_\varepsilon)} space.
\begin{lemma}\label{lemma:clustering} 
With probability at least 
\math{1-n^{-3/2}-\sqrt{n}\exp(-\delta\sqrt{n}/8)},
optimal clustering of 
the \att{}s with \math{\ell-1} and \math{\ell} clusters produces clustering errors which satisfy
\eqan{
\text{err}(\ell-1)&\ge&(\beta/\Delta-\alpha\epsilon^\rho)\Big(\kappa/2-2\sqrt{\log n/\gamma\delta\sqrt{n}}\Big)^2\qquad\qquad\text{for \math{\textstyle\frac{\log n}{\sqrt{n}}<\frac{\kappa^2\gamma\delta}{16}}} 
\\
\text{err}(\ell)&\le&
{\textstyle\frac14}\alpha\varepsilon^\rho(\mu_{\ell-1}-\mu_0)^2
+
4\log n(1+\alpha\varepsilon^\rho)/\gamma\delta\sqrt{n}
}
\end{lemma}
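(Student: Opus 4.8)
The plan is to prove both inequalities deterministically on the high-probability event of Lemma~\ref{lemma:concentrate}, namely $\max_j|\att_j-\nu_j|\le\tau$ with $\tau=2\sqrt{\log n/\gamma\delta\sqrt{n}}$. Since that event already carries probability $1-n^{-3/2}-\sqrt{n}\exp(-\delta\sqrt{n}/8)$, the stated probability follows for free, and everything below is a deterministic consequence. Throughout I would also invoke the counts from Lemma~\ref{lemma:pure}: at most $\alpha\varepsilon^\rho N_\varepsilon$ impure hypercubes and at least $N_\varepsilon(\beta/\Delta-\alpha\varepsilon^\rho)$ homogeneous hypercubes per level, recalling that a homogeneous hypercube of level $c$ has expected average effect $\nu_j=\mu_c$.

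For the upper bound on $\text{err}(\ell)$, I would exhibit one feasible (not necessarily optimal) $\ell$-clustering: place the $\ell$ centers at the true level means $\mu_0,\ldots,\mu_{\ell-1}$ and assign each $\att_j$ to its nearest center; the optimal error is then only smaller. Splitting the sum over homogeneous and impure clusters, each homogeneous cluster of level $c$ satisfies $(\att_j-\mu_c)^2\le\tau^2$ on the concentration event, so their average contribution is at most $\tau^2=4\log n/\gamma\delta\sqrt{n}$. Each impure cluster has $\nu_j\in[\mu_0,\mu_{\ell-1}]$, hence $\att_j$ sits within $\tau$ of this interval and its nearest center is at distance at most $\tfrac12(\mu_{\ell-1}-\mu_0)$ (when $\att_j$ lies inside) or at most $\tau$ (when it spills just outside), giving $(\att_j-\theta(\att_j))^2\le\tfrac14(\mu_{\ell-1}-\mu_0)^2+\tau^2$. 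Weighting the impure contribution by $N_{\text{impure}}/N_\varepsilon\le\alpha\varepsilon^\rho$ and the homogeneous contribution by at most $1$ assembles exactly $\tfrac14\alpha\varepsilon^\rho(\mu_{\ell-1}-\mu_0)^2+\tau^2(1+\alpha\varepsilon^\rho)$, which is the claimed bound.

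For the lower bound on $\text{err}(\ell-1)$, which I expect to be the crux, the key device is a pigeonhole argument. Because the $\ell$ level means are pairwise at least $\kappa$ apart by \textbf{A2}, any single center lies within $\kappa/2$ of at most one of them, so with only $\ell-1$ centers available at least one level, say $c^\ast$, is orphaned: \emph{no} center is within $\kappa/2$ of $\mu_{c^\ast}$. Every homogeneous cluster of level $c^\ast$ has $|\att_j-\mu_{c^\ast}|\le\tau$, so by the triangle inequality its assigned center obeys $|\att_j-\theta(\att_j)|\ge\kappa/2-\tau$, a quantity that is positive precisely under the hypothesis $\log n/\sqrt{n}<\kappa^2\gamma\delta/16$, which is what legitimizes squaring. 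Summing $(\kappa/2-\tau)^2$ over the at least $N_\varepsilon(\beta/\Delta-\alpha\varepsilon^\rho)$ homogeneous clusters of level $c^\ast$ and dividing by $N_\varepsilon$ yields $(\beta/\Delta-\alpha\varepsilon^\rho)(\kappa/2-\tau)^2$. The delicate points to guard against are the boundary case of a center sitting exactly at the midpoint of two means (distance exactly $\kappa/2$), which I would exclude by running the pigeonhole count with strict inequality, and the discipline of summing over \emph{only} the homogeneous clusters of the orphaned level, since impure clusters carry no guaranteed distance to $\mu_{c^\ast}$ and must simply be discarded from the lower bound.
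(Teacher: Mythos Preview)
Your proposal is correct and follows essentially the same route as the paper: condition on the concentration event of Lemma~\ref{lemma:concentrate}, bound $\text{err}(\ell)$ from above by exhibiting the feasible clustering with centers at the true means $\mu_0,\ldots,\mu_{\ell-1}$ and splitting homogeneous versus impure hypercubes via Lemma~\ref{lemma:pure}, and bound $\text{err}(\ell-1)$ from below by the pigeonhole argument that leaves one level $c^\ast$ with no center within $\kappa/2$ of $\mu_{c^\ast}$. Your handling of the impure hypercubes (the inside/outside case split yielding $\tfrac14(\mu_{\ell-1}-\mu_0)^2+\tau^2$ directly) and your care with strict inequality in the pigeonhole step are in fact slightly cleaner than the paper's own write-up, but the underlying argument is identical.
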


\begin{proof}
With the stated probability, by Lemma~\ref{lemma:concentrate},
all \att{}s are within 
\math{2\sqrt{\log n/\gamma\delta\sqrt{n}}} of the expected effect for their respective hypercube. This, together with 
Lemma~\ref{lemma:pure} is enough to prove the bounds.

First, the upper bound on \math{\text{err}(\ell)}. Choose
cluster centers \math{\mu_0,\ldots,\mu_{\ell-1}}, the 
expected effect for each level. This may not be optimal, so it gives
an upper bound on the cluster error. Each homogeneous hypercube 
has a expected effect which is one of these levels, and its 
\att{} is within \math{2\sqrt{\log n/\gamma\delta\sqrt{n}}} of the
corresponding \math{\mu}. Assign each \att{} for a homogeneous 
hypercube to
its corresponding \math{\mu}. The homogeneous hypercubes have 
total clustering error at most 
\math{4\log n(N_\varepsilon-N_{\text{impure}})/\gamma\delta\sqrt{n}}.
For an impure hypercube, the expected average effect is a convex combination of \math{\mu_0,\ldots,\mu_{\ell-1}}. Assign these
\att{}s to either \math{\mu_0} or \math{\mu_{\ell-1}}, with an error at most
\math{(2\sqrt{\log n/\gamma\delta\sqrt{n}}+\frac12(\mu_{\ell-1}-\mu_0))^2}. Thus,
\eqan{
N_\varepsilon\text{err}(\ell)
&\le&
\frac{4\log n(N_\varepsilon-N_{\text{impure}})}{\gamma\delta\sqrt{n}}
+
N_{\text{impure}}(2\sqrt{\log n/\gamma\delta\sqrt{n}}+{\textstyle\frac12}(\mu_{\ell-1}-\mu_0))^2
\\
&\le&
\frac{4\log n(N_\varepsilon+N_{\text{impure}})}{\gamma\delta\sqrt{n}}
+
\frac{N_{\text{impure}}(\mu_{\ell-1}-\mu_0)^2}{2}
}
The upper bound follows after dividing by \math{N_\varepsilon} and using 
\math{N_{\text{impure}}\le\alpha\varepsilon^\rho N_\varepsilon}.

Now, the lower bound on \math{\text{err}(\ell-1)}. Consider any \math{\ell-1}
clustering of the \att{}s with centers \math{\theta_0,\ldots,\theta_{\ell-2}}. At least \math{N_c\ge N_\varepsilon(\beta/\Delta-\alpha\epsilon^\rho)} of the \att{}s are within 
\math{2\sqrt{\log n/\gamma\delta\sqrt{n}}} of \math{\mu_c}. We also know that
\math{\mu_{c+1}-\mu_c\ge\kappa}. Consider the \math{\ell} disjoint intervals 
\math{[\mu_c-\kappa/2,\mu_c+\kappa/2]}.
By the pigeonhole principle, at least one of these intervals \math{[\mu_{c*}-\kappa/2,\mu_{c*}+\kappa/2]}
does not contain
a center. Therefore all the \att{}s associated to \math{\mu_{c*}} will incur an
error at least \math{\kappa/2-2\sqrt{\log n/\gamma\delta\sqrt{n}}} when
\math{\kappa/2>2\sqrt{\log n/\gamma\delta\sqrt{n}}}.
The total 
error is
\mand{
N_\varepsilon\text{err}(\ell-1)
\ge
N_{c*} \Big(\kappa/2-2\sqrt{\log n/\gamma\delta\sqrt{n}}\Big)^2.
}
Using \math{N_{c*}\ge N_\varepsilon(\beta/\Delta-\alpha\epsilon^\rho)}
and dividing by \math{N_\varepsilon} concludes the proof.
\end{proof}
Lemma~\ref{lemma:clustering} is crucial to estimating the number of
levels. The error is at least
\math{\beta\kappa^2/4\Delta(1+o(1))} for fewer than 
\math{\ell} clusters and at most 
\math{\frac14\alpha\varepsilon^\rho(\mu_{\ell-1}-\mu_0)^2(1+o(1))} for 
\math{\ell} or more clusters. Any function \math{\tau(n)} that asymptotically separates 
these two errors can serve as an error threshold. 
The function should be agnostic to the parameters \math{\alpha,\beta,\kappa,\Delta,\rho,\ldots}. In practice, \math{\rho=1} and since
\math{\varepsilon\sim 1/n^{1/2d}}, we have chosen 
\math{\tau(n)=\log n/n^{\rho/2d}}. Since \math{\text{err}(\ell-1)} is asymptotically
constant, \math{\ell-1} clusters can't achieve error 
\math{\tau(n)} (asymptotically). Since \math{\text{err}(\ell)\in O(\varepsilon^\rho)},
\math{\ell} clusters can achieve error \math{\tau(n)} (asymptotically).
Hence, choosing \math{\hat\ell} as the minimum number of clusters that 
achieves error \math{\tau(n)} will asymptotically output 
the correct number of clusters \math{\ell}, with high probability,
proving part~(1) of Theorem~\ref{theorem:main}.

We now prove parts (2) and (3) of  Theorem~\ref{theorem:main}, which follow from the 
accuracy of steps 4 and 5 in the algorithm. We know the algorithm asymptotically selects the correct number of levels with high probability.
We show that each level is populated by mostly the homogeneous clusters of that level.
\begin{lemma} With probability at least 
\math{1-n^{-3/2}-\sqrt{n}\exp(-\delta\sqrt{n}/8)}, asymptotically in \math{n},
all the \math{N_c} \att{}s from the homogeneous hypercubes of level \math{c} are assigned to the same cluster in the optimal clustering, and no \att{}s from a different level's
homogeneous hypercubes is assigned to this cluster.
\end{lemma}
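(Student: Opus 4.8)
The plan is to condition throughout on the high-probability event of Lemma~\ref{lemma:concentrate}, on which every $\att_j$ lies within $\eta_n:=2\sqrt{\log n/\gamma\delta\sqrt{n}}$ of its expected hypercube effect $\nu_j$; this event carries exactly the probability stated in the lemma, and $\eta_n\to 0$. Two facts hold on this event and will be used repeatedly: every $\att_j$ lies in the bounded range $[\mu_0-\eta_n,\mu_{\ell-1}+\eta_n]$, because each $\nu_j$ is a convex combination of $\mu_0,\ldots,\mu_{\ell-1}$; and each homogeneous $\att_j$ of level $c$ satisfies $|\att_j-\mu_c|\le\eta_n$. By part~(1) of the theorem the optimal clustering uses exactly $\ell$ centers $\theta_0,\ldots,\theta_{\ell-1}$.

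Step one is to locate the centers. Consider the intervals $I_c=[\mu_c-\kappa/2,\mu_c+\kappa/2]$, which are pairwise disjoint by {\bf A2}. If some $I_c$ contained no center, then all $N_c$ homogeneous $\att$'s of level $c$ --- a constant fraction of the data, since $N_c\ge N_\varepsilon(\beta/\Delta-\alpha\varepsilon^\rho)$ by Lemma~\ref{lemma:pure} --- would sit at distance at least $\kappa/2-\eta_n$ from their assigned center, forcing $\text{err}(\ell)\ge(\beta/\Delta-\alpha\varepsilon^\rho)(\kappa/2-\eta_n)^2$, an asymptotically positive constant that contradicts $\text{err}(\ell)=o(1)$ from Lemma~\ref{lemma:clustering}. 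Hence each of the $\ell$ disjoint intervals contains at least one of the $\ell$ centers, so by pigeonhole each $I_c$ contains exactly one center, which I call $\theta_c$.

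Step two, the crux, is to sharpen this to $\theta_c=\mu_c+o(1)$. A homogeneous $\att$ of level $c$ that is not assigned to $\theta_c$ must be assigned to a center outside $I_c$, hence incurs squared error at least $(\kappa/2-\eta_n)^2=\Theta(1)$; since the total squared error is $N_\varepsilon\,\text{err}(\ell)=o(N_\varepsilon)$, at most $o(N_\varepsilon)$ such misassignments occur in all. Therefore $\theta_c$ is the mean of at least $N_c-o(N_\varepsilon)=\Theta(N_\varepsilon)$ points that lie within $\eta_n$ of $\mu_c$, together with at most $o(N_\varepsilon)$ contaminating points (impure $\att$'s, of which $N_{\text{impure}}\le\alpha\varepsilon^\rho N_\varepsilon=o(N_\varepsilon)$ by Lemma~\ref{lemma:pure}, plus the $o(N_\varepsilon)$ misassigned points), each confined to the bounded range above. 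Writing the optimal center as the mean of its assigned points and splitting this mean into the dominant ``good'' block and the $o(N_\varepsilon)$ contamination gives $|\theta_c-\mu_c|\le\eta_n+o(N_\varepsilon)\cdot O(1)/\Theta(N_\varepsilon)=o(1)$.

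Step three recomputes the assignments with $|\theta_c-\mu_c|=o(1)$ now known for every $c$. Any homogeneous $\att$ $a$ of level $c$ has $|a-\theta_c|\le\eta_n+o(1)=o(1)$, whereas for $c'\ne c$ we have $|a-\theta_{c'}|\ge|\mu_c-\mu_{c'}|-\eta_n-o(1)\ge\kappa-o(1)$; for $n$ large the former is strictly smaller, so $a$ is assigned to $\theta_c$, and by the same inequality no homogeneous $\att$ of a level $c'\ne c$ is assigned to $\theta_c$. I expect this last upgrade from ``almost all'' to ``all'' to be the main obstacle: the counting argument in step two only bounds the misassigned clusters by $o(N_\varepsilon)$, and eliminating every last one relies on pinning the optimal centers to within $o(1)$ of the true level effects through the centroid property of $k$-means, which in turn requires the boundedness of the $\att$ range on the concentration event so that the $o(N_\varepsilon)$ contaminating clusters cannot drag any center by more than $o(1)$.
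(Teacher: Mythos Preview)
Your argument is correct, but it takes a longer route than the paper's. The paper's proof fits in three lines by using the intervals \math{[\mu_c-\kappa/4,\mu_c+\kappa/4]} instead of your \math{[\mu_c-\kappa/2,\mu_c+\kappa/2]}. With the smaller radius, the intervals are disjoint with a \math{\kappa/2} gap between them, and the same pigeonhole argument places exactly one center \math{\theta_c} in each. The point is that \math{\kappa/4} is already tight enough to read off the assignment directly: a homogeneous \math{\att} of level \math{c} is within \math{\eta_n+\kappa/4<\kappa/2} of \math{\theta_c} and at least \math{\kappa-\eta_n-\kappa/4=3\kappa/4-\eta_n>\kappa/2} from any other center, so it goes to \math{\theta_c}. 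No bootstrapping through the centroid property is needed.

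Your route---starting from \math{\kappa/2} intervals, counting misassignments via the \math{o(1)} total clustering error, and then using the centroid identity plus boundedness of the \math{\att} range to pin each \math{\theta_c} to \math{\mu_c+o(1)}---is a valid alternative, and in fact it proves a stronger intermediate statement (\math{\theta_c=\mu_c+o(1)}) that the paper only establishes later in Lemma~\ref{lemma:cluster-effect}. One small point: your step~1 asserts the \math{\kappa/2} intervals are pairwise disjoint, but {\bf A2} only gives \math{|\mu_c-\mu_{c'}|\ge\kappa}, so they may touch at an endpoint; this does not break the pigeonhole conclusion, but the \math{\kappa/4} choice sidesteps even that nuisance.
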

\begin{proof}
Similar to the proof of Lemma~\ref{lemma:clustering}, consider the 
\math{\ell} disjoint intervals 
\math{[\mu_c-\kappa/4,\mu_c+\kappa/4]}. One center 
\math{\theta_c} must be placed in this interval otherwise the clustering
error is asymptotically constant, which is not optimal. All the \att{}s for level 
\math{c} are (as \math{n} gets large) 
more than \math{\kappa/2} away from any other center, and at most \math{\kappa/2} away from \math{\theta_c}, which means all these \att{}s get assigned to \math{\theta_c}. 
\end{proof}
Similar to Lemma~\ref{lemma:points}, we can get a high-probability upper bound of 
\math{a\sqrt{n}} on the
maximum number of points in a cluster.
Asymptotically, the number of points in the impure clusters
is \math{n_{\text{impure}}\in O(\varepsilon^\rho \sqrt{n}N_\varepsilon)}.
Suppose these impure points have expected average effect \math{\mu} (a convex
combination of the \math{\mu_c}'s).
The number of points in level \math{c} homogeneous clusters is
\math{n_c\in\Omega(\sqrt{n}N_\varepsilon)}. Even if all impure points are added to level 
\math{c}, the expected average effect for the points in level \math{c} is 
\mldc{\Exp[\ite\given \text{assigned to level \math{c}}]=\frac{n_{\text{impure}}\mu+n_c\mu_c}{n_{\text{impure}}+n_c}
=\mu_c+O(\varepsilon^\rho).\label{eq:effect1}}
Part (2) of Theorem~\ref{theorem:main} follows from the next lemma after setting
\math{\varepsilon\sim 1/n^{1/2d}} and \math{\rho=1}.
\begin{lemma}\label{lemma:cluster-effect}
Estimate \math{\hat\mu_c} as the average \ite{} for all points assigned to 
level \math{c} (the \math{c}th order statistic of the optimal centers 
\math{\theta_0,\ldots,\theta_{\hat\ell-1}}). Then 
\math{\hat\mu_c=\mu_c+O(\varepsilon^\rho+\sqrt{\log n/n})} with probability 
\math{1-o(1)}.
\end{lemma}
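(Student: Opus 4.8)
The plan is to write $\hat\mu_c$ as an empirical average of $\ite$s over a data-dependent set of points, identify its conditional expectation using \eqref{eq:effect1}, and then invoke the concentration assumption {\bf A5} to control the fluctuation around that expectation. First I would condition on the good event $\cl E$ underlying all the earlier lemmas: every cluster is large (Lemma~\ref{lemma:points}), every $\att_j$ lies within $2\sqrt{\log n/\gamma\delta\sqrt n}$ of its hypercube mean $\nu_j$ (Lemma~\ref{lemma:concentrate}), and the optimal clustering sends all homogeneous level-$c$ hypercubes to a single center while excluding the homogeneous hypercubes of every other level (the preceding lemma). Those results give $\Prob[\overline{\cl E}]\le n^{-3/2}+\sqrt n\exp(-\delta\sqrt n/8)=o(1)$, so it suffices to establish the bound on $\cl E$ and pay this failure probability at the end.

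On $\cl E$ the points assigned to level $c$ are the $n_c$ points in homogeneous level-$c$ hypercubes together with at most $n_{\text{impure}}$ impure points. The counts are already pinned down in the discussion preceding the lemma: using $N_\varepsilon\sim\sqrt n$ and the purity bound of Lemma~\ref{lemma:pure}, we have $n_c\in\Omega(\sqrt n\,N_\varepsilon)=\Omega(n)$ while $n_{\text{impure}}\in O(\varepsilon^\rho\sqrt n\,N_\varepsilon)=O(\varepsilon^\rho n)$. Equation~\eqref{eq:effect1} then identifies the conditional mean of $\hat\mu_c$ as $(n_{\text{impure}}\mu+n_c\mu_c)/(n_{\text{impure}}+n_c)=\mu_c+O(\varepsilon^\rho)$, since the impure mass is an $O(\varepsilon^\rho)$ fraction of the total and $\mu$ is a bounded convex combination of the $\mu_{c'}$. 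This isolates the bias term $O(\varepsilon^\rho)$.

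It then remains to bound the deviation of $\hat\mu_c$ from this conditional mean. The number of points averaged is $m=n_{\text{impure}}+n_c=\Theta(n)$, so applying {\bf A5} with deviation $\epsilon=C\sqrt{\log n/n}$ gives $\Prob[|\hat\mu_c-\Exp[\hat\mu_c]|>\epsilon]\le e^{-\gamma m\epsilon^2}=n^{-\Theta(C^2)}$; taking $C$ large enough makes this a negligible $n^{-\Theta(1)}$. Combining the $O(\varepsilon^\rho)$ bias with the $O(\sqrt{\log n/n})$ fluctuation yields $\hat\mu_c=\mu_c+O(\varepsilon^\rho+\sqrt{\log n/n})$ for each $c$, and a union bound over the constantly many levels together with $\Prob[\overline{\cl E}]=o(1)$ gives the claim with probability $1-o(1)$.

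The main obstacle is the independence needed to apply {\bf A5}: the set of points entering $\hat\mu_c$ is itself a function of the realized $\ite$s, through the cluster $\att_j$s and the clustering step, so the averaging set and the averaged values are not a priori independent. The way I would resolve this is to observe that on $\cl E$ the membership of every homogeneous hypercube in level $c$ is forced purely by the feature locations $h(x_i)=c$, independently of the precise $\ite$ realizations; hence, after conditioning on the features (which fixes $n_c$, $n_{\text{impure}}$, and which hypercubes are homogeneous), the $\ite$s feeding $\hat\mu_c$ are exactly the conditionally independent, well-behaved variables to which {\bf A5} applies. One must additionally check that the small impure contribution concentrates, but since it carries only $O(\varepsilon^\rho)$ relative weight its fluctuation is dominated by the $O(\sqrt{\log n/n})$ term already accounted for.
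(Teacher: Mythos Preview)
Your proposal is correct and follows essentially the same route as the paper: identify the conditional mean via \eqref{eq:effect1} as $\mu_c+O(\varepsilon^\rho)$, then apply the concentration assumption {\bf A5} (the paper says ``Chernoff bound'') to $\Theta(n)$ points to get the $O(\sqrt{\log n/n})$ fluctuation. The paper's proof is a two-line sketch that omits exactly the data-dependence subtlety you flag in your last paragraph; your resolution (conditioning on features so that homogeneous-hypercube membership is deterministic on $\cl E$, with the impure remainder absorbed by its $O(\varepsilon^\rho)$ weight) is a reasonable way to fill that gap.
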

\begin{proof}
Apply a Chernoff bound. We are taking an average of proportional to 
\math{n} points with expectation in 
\r{eq:effect1}. This average will approximate the expectation to within 
\math{\sqrt{\log n/n}} with probability \math{1-o(1)}. The details are very similar to 
the proof of Lemma~\ref{lemma:concentrate}, so we omit them.
\end{proof}
Part (3) of Theorem~\ref{theorem:main} now follows because 
all but the \math{O(\varepsilon^\rho)} fraction of points in the impure 
clusters are
assigned a correct expected effect. An additional fine-tuning 
leads to as
much as \math{2\times} improvement in experiments. For each point, consider the \math{\varepsilon}-hypercube centered on that point. By a Chernoff bound, each of these \math{n} hypercubes has 
\math{\Theta(\sqrt n)} points, as in Lemma~\ref{lemma:points}.
All but a fraction \math{O(\varepsilon^\rho)} of these are impure. 
Assign each point to the center \math{\theta_c} that best matches its 
hypercube-``smoothed'' \ite, giving new subpopulations 
\math{X_c} and corresponding subpopulation-effects \math{\hat\mu_c}. 
This EM-style update can be iterated. 
Our simulations show the results for one E-M update.

\section{Demonstration on Synthetic Data}
\label{subsec:synthetic}

We use a 2-dimensional synthetic experiment with three levels
to demonstrate our pre-cluster and merge algorithm (PCM). Alternatives to
our PCM approach
include state-of-the-art methods that directly predict the effect such as meta-learners, and the Bayes optimal classifier based on \ite{}s. All methods used a base
gradient boosting forest with 400 trees to estimate counterfactuals.
%Both of the alternatives fail, especially when the \ite{}s are noisy. 
The subpopulations in our experiment are shown in 
Figure~\ref{fig:data}, where black is effect-level 0, gray is level 1 and white is level 2.
We present detailed results with \math{n=200}K. 
Extensive results can be found in the appendix. 
Let us briefly describe the two existing benchmarks we will compare against.

\newcommand{\pathdd}{data}
\newcommand{\pathdU}{dataUnif}
\newcommand{\recwW}{0.47\columnwidth}

\begin{figure}[!t]
{\tabcolsep0pt
\begin{tabular}{m{0.35\textwidth}m{0.65\textwidth}}
        \includegraphics[width=0.3\textwidth]{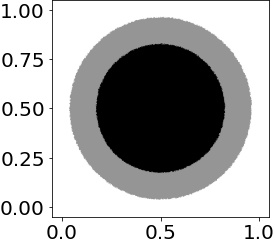}
        &\small
The treatment $t$ is  distributed  randomly between the subjects. The outcome $y$, conditioned on $c$ and $t$, is Gaussian with std. dev. 5:
\mandc{y(t,c)\sim N(\mu_{(t,c)}, 5)}%
The three sub-populations have treatment effects of 0,1,2. 
The expected potential outcome for treatment and level \math{(t,c)} are:
\mand{
\begin{array}{c@{\hspace*{30pt}}c}
\mu_{(0,0)}=0&\mu_{(1,0)}=0,\\ 
\mu_{(0,1)}=0&\mu_{(1,1)}=1,\\
\mu_{(0,2)}=0&\mu_{(1,2)}=2.
\end{array}
}
\end{tabular}}
\caption{Different subpopulations (black, gray, white)
  for synthetic data.}
    \label{fig:data}
\end{figure}

%removed
\remove{
A subject $x$ in our study can belong in either of the three groups $c_2, c_1, c_0 (c(x)=2, c(x)=1, c(x)=0)$. These groups are unknown to us, but play a crucial role in our causal inference analysis, because along with the treatment $t$ assigned to a subject, they determine the subjects outcomes and hence the treatment effect. Imagine an RCT scenario in which the prior probability of a subject to be in $c_i$, $p_{c_i}$, is a lot smaller than $p_{c_j}, \,\, j\neq i$, and at the same time groups $c_j$ have  large positive effects (benefit from the treatment) while $c_i$ group is not susceptible to the treatment (no effect). A standard causal analysis will conclude that the treatment does not work and a simple subgroup analysis will reach  the same conclusion. This mixing bias results in an underestimation of the treatment effect.

This is where our methodology is useful in order to recover the effect groups. In our theory at section ~\ref{sec:theory} we propose a box clustering method to recover homogeneous clusters and we proved that as the number of subjects in our trial increases, we are able to recover higher quality clusters. Of course this type of boxing algorithm could not be used in practise in this form, so we propose that unsupervised clustering algorithms could be used in its place, the precise steps of our methodology are given at algorithm~\ref{alg:algorithm1}
\begin{algorithm}
\DontPrintSemicolon
  \SetAlgoLined
\caption{Effect Untangling Algorithm}\label{alg:group-discovery}
\label{alg:algorithm1}
  \KwSty{Data:}$\{x_i,t_i,u^t_i, u_i^{1-t_i}\}_{i=1}^N$\;
  \KwSty{Return:}$\{x_i,c_i,t_i,u^t_i, u_i^{1-t_i}\}_{i=1}^N$\; 
  \KwSty{Step 1:} Apply unsupervised clustering on $x_i$'s with $O(\sqrt{N})$ clusters\;
   \KwSty{Step 2:} Calculate average treatment effects per cluster group\;
   \KwSty{Step 3:} Apply unsupervised clustering with $k$ clusters on the cluster effects \;
   \KwSty{Step 4:} Assign in each $x_i$ the corresponding cluster $c_i$ as calculated in the previous step\;
 %  \caption{A general algorithm to automatically uncover hidden effect groups.}
\end{algorithm}

\subsection{Synthetic Data}
\label{sec:synthetic}
}
%%%%%%%%%%%%%%%%%%%%%%%%%%%

\textbf{X-learner \citep{kunzel2019metalearners}}, is a meta-learner that estimates heterogeneous treatment effects directly from \ite{}s. For the outcome and effect models of X-Learner we use a base gradient boosting learner with 400 estimators~\citep{friedman2001greedy} implemented in scikit-learn \citep{pedregosa2011scikit}. For the propensity model we use logistic regression.

{\bf Bayes Optimal} uses the \ite{}s to reconstruct the subpopulations, 
given the number of levels and the ground-truth outcome distribution \math{y(t,c)} from Figure~\ref{fig:data}. The Bayes optimal 
classifier is:
\math{c_{\text{Bayes}}=0} if \math{\ite\le 0.5},
\math{c_{\text{Bayes}}=1} if \math{0.5<\ite\le 1.5},
\math{c_{\text{Bayes}}=2} if \math{1.5<\ite}. 
We also use these thresholds to reconstruct 
subpopulations for X-learner's predicted \ite{}s. 

\noindent
    {\bf Note: Neither the bayes optimal thresholds nor the number of levels are available in practice.} So the subpopulations reconstructed  in the X-learner and
    Bayes Optimal benchmarks
    are optimistic because they have access to this forbidden information.
    Even still, we will see that our PCM method outperforms these optimistic
    benchmarks showcasing its power to extract subpopulations
    which outperform the competition, even when the competition has
    access to the forbidden information but PCM does not.

Let \math{c_i} be the level of subject \math{i} and \math{\itehat_i} the estimated \ite.
We define the error on subject \math{i} as
\math{|\mu_{c_i}-\itehat_i|}, and we report the mean absolute error
in the table below. 
Our algorithm predicts a level \math{\hat c_i} and uses its associated 
effect \math{\hat\mu_{\hat c_i}} as \math{\itehat_i}. The other methods predict 
\ite{} directly for which we compute mean absolute error.
\begin{center}{\renewcommand{\arraystretch}{1.2}
    \begin{tabular}[b]{c|c|cc|cc}    
    $n$ & \textbf{PCM} (this work)&\multicolumn{2}{c|}{\textbf{X-Learner}}&\multicolumn{2}{c}{\textbf{Bayes Optimal}}\\
    &&\color{red}Subpopulations&Predicted-\ite&\color{red}Subpopulations&Raw-\ite\\
    \hline
    \textbf{20K} & \textbf{0.35}$\pm$$\bm{0.39}$ & \color{red}3.04 $\pm$ 1.11& 3.07 $\pm$ 2.41&\color{red}4.57 $\pm$ 1.33 & 4.59 $\pm$ 3.49\\
    \textbf{200k} & \textbf{0.109}$\pm$$\bm{0.22}$& \color{red}1.44 $\pm$ 0.83& 1.50 $\pm$ 1.38&\color{red}4.22 $\pm$ 1.28 & 4.24 $\pm$ 3.22\\
    \textbf{2M} &\textbf{0.036}$\pm$$\bm{0.13}$ &\color{red} 0.34 $\pm$ 0.47& 0.46 $\pm$ 0.56& \color{red}4.01 $\pm$ 1.25& 4.03 $\pm$ 3.05\\
    \end{tabular}}
\end{center}
The entries in red use the forbidden information on the number of
levels and the thresholds for the levels to extract subpopulations
and predict \ite{}s from those optimally reconstructed
subpopulations. The red entries are not available in practice, but we
include them for comparison.
Our algorithm is about \math{10\times} better than existing benchmarks 
even though we do not use the forbidden information (number of levels and 
optimal thresholds). It is also clear that X-learner is significantly
better than Bayes optimal with just the raw \ite{}s. This is because
X-learner learns some form of internal smoothing. Our algorithm explicitly
does the smoothing in a provable way.

The next table shows subpopulation effects and red indicates 
the use of forbidden information (number of levels and optimal thresholds).
Ground truth effects are \math{\mu_0=0, \mu_1=1, \mu_2=2}.
Note: \math{\hat\mu_1} for X-learner and Bayes optimal are accurate,
an artefact of knowing the optimal thresholds (not realizable in practice).
\begin{center}   {\renewcommand{\arraystretch}{1.2}\tabcolsep8pt
\begin{tabular}{l|lll|lll|lll}
\math{n}& \multicolumn{3}{c|}{\textbf{PCM} (this work)}                               & \multicolumn{3}{c|}{\textbf{X-Learner}}                               & \multicolumn{3}{c}{\textbf{Bayes Optimal}}                               \\
& \textbf{$\hat\mu_0$} & \textbf{$\hat\mu_1$} & \textbf{$\hat\mu_2$} 
& \color{red}\textbf{$\hat\mu_0$} & \color{red}\textbf{$\hat\mu_1$} & \color{red}\textbf{$\hat\mu_2$} 
& \color{red}\textbf{$\hat\mu_0$} & \color{red}\textbf{$\hat\mu_1$} & \color{red}\textbf{$\hat\mu_2$} 
\\
                          \hline
\textit{\textbf{20K}}    
& -0.21                      & 0.91                     & 2.07                   
& \color{red}-2.5                     & \color{red}0.99                     & \color{red}4.44       
& \color{red}-3.94                      & \color{red}1.00                    & \color{red}5.99                     \\
\textit{\textbf{200K}} 
& 0.06                     & 0.963                     & 1.95                   
& \color{red}-1.16                      & \color{red}1.01                      & \color{red}2.87    
& \color{red}-3.62                    & \color{red}1.00                    & \color{red}5.61                    \\
\textit{\textbf{2M}} 
& 0.04                     & 0.996                      & 1.993          
& \color{red}-0.26                      & \color{red}0.99                      & \color{red}2.07  
& \color{red} -3.41 & \color{red} 1.00  & \color{red} 5.41
\end{tabular}}
\end{center}
A detailed comparison of our algorithm (PCM) with X-Learner and Bayes optimal
subpopulations is shown in Figure~\ref{tab:recHist}.
PCM clearly extracts the correct subpopulations. X-Learner and Bayes optimal,
even given the number of levels and optimal thresholds, does not come visually
close to PCM. Note, X-learner does display some structure but Bayes optimal
on just the \ite{}s is a disaster. This is further illustrated in the
\ite-histograms in the second row of Figure~\ref{tab:recHist}.
PCM clearly shows three effect-levels,
where as X-learner \ite{}s
and the raw \ite{}s suggest just one high variance effect-level.
The 3rd row in Figure~\ref{tab:recHist}
shows the confusion matrices for subpopulation assignment. 
The red indicates use of information forbidden in practice, however
we include it for comparison. 
The confusion matrix for PCM without
forbidden information clearly dominates the other methods which
use forbidden information. The high noise in the outcomes 
undermines the other methods, while PCM is robust.
In high noise settings, direct use of the \ite{}s without some form of pre-clustering fails.

\newcommand{\pKKKh}{./}
\newcommand{\pMMh}{./}
\newcommand{\pKKh}{./}

\newcommand{\adaptive}{ITE-ADAPTIVE-3CLUST}
\newcommand{\xl}{ITE-XML-3Clust}

\begin{figure}[!h]
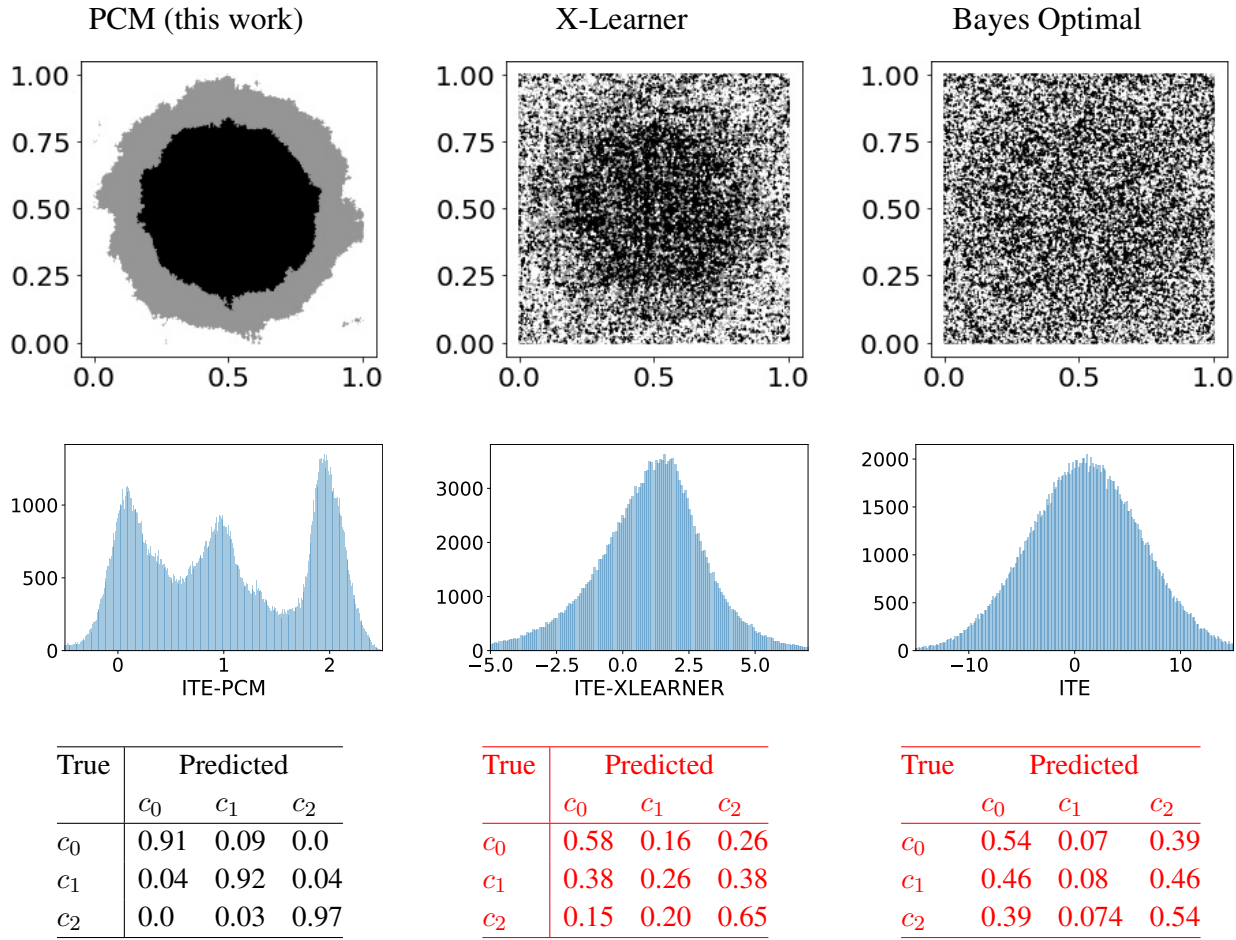

  \tabcolsep10pt\begin{tabular}{@{}ccc@{}}
    \large PCM (this work) & \large X-Learner & \large Bayes Optimal \\[8pt]
    \addpic{\pKK/\adaptive} &\addpic{\pKK/\xl} &\addpic{\pKK/ITE-2CGDBR}  \\[16pt]
    \addpic{\pKKh/adaptivehist} &\addpic{\pKKh/xlearnhist} &\addpic{\pKKh/ITE2hist} \\ \\
    
    \tabcolsep 5pt \renewcommand{\arraystretch}{1.05}
      \begin{tabular}{@{}l|lll@{}}
        \hline
        True&\multicolumn{3}{c}{Predicted}\\
        & \math{c_0} & \math{c_1} & \math{c_2} \\ \hline
        \math{c_0} & 0.91   & 0.09   & 0.0              \\
        \math{c_1} & 0.04  & 0.92    & 0.04            \\
        \math{c_2} & 0.0    & 0.03   & 0.97            \\ \hline
      \end{tabular}
    &
   \color{red}
   \tabcolsep 5pt \renewcommand{\arraystretch}{1.05}
   \begin{tabular}{@{}l|lll@{}}
     \hline
     True&\multicolumn{3}{c}{Predicted}\\
     & \math{c_0 } & \math{c_1 } & \math{c_2 } \\ \hline
     \math{c_0 } & 0.58    & 0.16     & 0.26             \\
\math{c_1 } & 0.38         & 0.26     & 0.38             \\
\math{c_2 } & 0.15         & 0.20     & 0.65             \\ \hline
\end{tabular}
    &
   \tabcolsep 5pt \renewcommand{\arraystretch}{1.05}
     \color{red}
     \begin{tabular}{@{}llll@{}}
       \hline
       True&\multicolumn{3}{c}{Predicted}\\
       & \math{c_0 } & \math{c_1 } & \math{c_2 } \\ \hline
       \math{c_0 } & 0.54   & 0.07    & 0.39             \\
\math{c_1 } & 0.46          & 0.08    & 0.46             \\
\math{c_2 } & 0.39          & 0.074   & 0.54             \\ \hline
\end{tabular}
    \end{tabular}

  \caption{
    Comparison of subpopulations from PCM, X-Learner and Bayes Optimal for
    200K points.
    {\bf Top.} PCM gives superior subpopulations
    without using the forbidden information used by X-learner and Bayes optimal (number of levels and optimal thresholds).
    {\bf Middle.} The \ite-histogram for PCM shows 3 distinct effects, while the other methods suggest a single high-variance effect. The \ite{}
    of a subject in PCM is the average (smoothed) \ite{} of the
    \math{\varepsilon}-hypercube centered on 
    the subject.
      {\bf Bottom.} Subpopulation confusion matrices show that PCM extracts the correct subpopulations. The other methods fail
      even with the forbidden information on the number
      of effect levels and the optimal effect-thresholds separating
      levels.
    %Our algorithm is able to extract the correct populations (histograms indicate three peaks). Contrarily, the competing methods indicate a unique effect population with high variance.
    }
    \label{tab:recHist}
  
\end{figure}
 
\paragraph{\bf Summary of experiments with synthetic data.}
The PCM
algorithm accurately extracts subpopulations at different effect-levels. 
Analysis of individual treatment effects fails when there is noise. Our experiments show that practice follows the theory (more detailed experiments, including how cluster homogeneity converges to 1, are shown in the appendix).
We note that there is a curse of dimensionality, namely the
convergence is at a rate \math{O(n^{-1/2d})}.

\section{Conclusion}
\label{sec:conclusion}
Our work amplifies the realm of 
causal analysis to non-targeted trials where
the treated population can consist of 
large subpopulations with different 
effects. 
Our algorithm uses a plug-and-play pre-cluster and merge strategy 
that provably untangles the different effects.
Experiments on synthetic data show a 
\math{10\times} or more improvement over existing HTE-benchmarks.

One main contribution is the idea to pre-cluster as a way of
smoothing individual \ite{}s. Even the Bayes Optimal classifier based
on individual \ite{}s (a classifier that is unavailable in practice)
breaks down when the individual \ite{}s are noisy -- multiple
heterogeneous subpopulations appear as one noisy subpopulation with
large variance in the effects.
This means that some form of the smoothing of the \ite{}s is needed
before subpopulation effects analysis. Existing methods based on
machine learning to learn the full effect function do indirectly
perform such smoothing but our PCM approach explicitly does so with
provable guarantees on consistency.
In our analysis, 
we did not attempt to optimize the rate of convergence.
Optimizing this rate could lead to improved algorithms.

Our work allows causal effects analysis to be used in settings
such as health interventions, where wide deployment over a mostly
healthy population results in a heterogeneous treated population. The
prevelance of healthy individuals would mask the effect on the sick population,
making methods such as ours useful.
Our methods can seemlessly untangle the effects without knowledge of
what sick and healthy mean. This is because our methods require only mild
technical assumptions which are likely to hold in practice, almost agnostic
to the causal analysis setting. 
This line of algorithms that focuses on extracting subpopulations
can also help in identifying inequities between
the subpopulations, by correlating patient features with the extracted
effect-subpopulations.

One significant technical
contribution is to reduce the untangling of subpopulation effects to 
a one dimensional clustering problem which we solve efficently.
This approach may be of independent interest beyond causal-effect analysis.
The effect is just a function that takes on \math{\ell} levels in our setting.
Our approach can be used to learn any function that takes on a finite
number of levels. It could also be used to learn a piecewise
approximation to an arbitrary continuous function on a compact set.

%\subsubsection*{Acknowledgments}
%The authors wish to thank Kristen Bennet and Jason Kuruzovich

\bibliographystyle{plainnat}
\bibliography{uai22bib.bib} 

\begin{thebibliography}{27}
\providecommand{\natexlab}[1]{#1}
\providecommand{\url}[1]{\texttt{#1}}
\expandafter\ifx\csname urlstyle\endcsname\relax
  \providecommand{\doi}[1]{doi: #1}\else
  \providecommand{\doi}{doi: \begingroup \urlstyle{rm}\Url}\fi

\bibitem[Abdia et~al.(2017)Abdia, Kulasekera, Datta, Boakye, and
  Kong]{abdia2017propensity}
Younathan Abdia, KB~Kulasekera, Somnath Datta, Maxwell Boakye, and Maiying
  Kong.
\newblock Propensity scores based methods for estimating average treatment
  effect and average treatment effect among treated: a comparative study.
\newblock \emph{Biometrical Journal}, 59\penalty0 (5):\penalty0 967--985, 2017.

\bibitem[Alaa et~al.(2017)Alaa, Weisz, and Van Der~Schaar]{alaa2017deep}
Ahmed~M Alaa, Michael Weisz, and Mihaela Van Der~Schaar.
\newblock Deep counterfactual networks with propensity-dropout.
\newblock \emph{arXiv preprint arXiv:1706.05966}, 2017.

\bibitem[Athey et~al.(2019)Athey, Tibshirani, and Wager]{athey2019generalized}
Susan Athey, Julie Tibshirani, and Stefan Wager.
\newblock Generalized random forests.
\newblock \emph{The Annals of Statistics}, 47\penalty0 (2):\penalty0
  1148--1178, 2019.

\bibitem[Bang and Robins(2005)]{bang2005doubly}
Heejung Bang and James~M Robins.
\newblock Doubly robust estimation in missing data and causal inference models.
\newblock \emph{Biometrics}, 61\penalty0 (4):\penalty0 962--973, 2005.

\bibitem[Battocchi et~al.(2019)Battocchi, Dillon, Hei, Lewis, Oka, Oprescu, and
  Syrgkanis]{econml}
Keith Battocchi, Eleanor Dillon, Maggie Hei, Greg Lewis, Paul Oka, Miruna
  Oprescu, and Vasilis Syrgkanis.
\newblock {EconML}: {A Python Package for ML-Based Heterogeneous Treatment
  Effects Estimation}.
\newblock https://github.com/microsoft/EconML, 2019.
\newblock Version 0.x.

\bibitem[Chernozhukov et~al.(2022)Chernozhukov, Demirer, Duflo, and
  Fernández-Val]{chernozhukov2022generic}
Victor Chernozhukov, Mert Demirer, Esther Duflo, and Iván Fernández-Val.
\newblock Generic machine learning inference on heterogenous treatment effects
  in randomized experiments, 2022.

\bibitem[Friedman(2001)]{friedman2001greedy}
Jerome~H Friedman.
\newblock Greedy function approximation: a gradient boosting machine.
\newblock \emph{Annals of statistics}, pages 1189--1232, 2001.

\bibitem[Friedman(2002)]{friedman2002stochastic}
Jerome~H Friedman.
\newblock Stochastic gradient boosting.
\newblock \emph{Computational statistics \& data analysis}, 38\penalty0
  (4):\penalty0 367--378, 2002.

\bibitem[Greenland(1990)]{greenland1990randomization}
Sander Greenland.
\newblock Randomization, statistics, and causal inference.
\newblock \emph{Epidemiology}, pages 421--429, 1990.

\bibitem[Hainmueller(2012)]{hainmueller2012entropy}
Jens Hainmueller.
\newblock Entropy balancing for causal effects: A multivariate reweighting
  method to produce balanced samples in observational studies.
\newblock \emph{Political analysis}, 20\penalty0 (1):\penalty0 25--46, 2012.

\bibitem[Hill(2011)]{hill2011bayesian}
Jennifer~L Hill.
\newblock Bayesian nonparametric modeling for causal inference.
\newblock \emph{Journal of Computational and Graphical Statistics}, 20\penalty0
  (1):\penalty0 217--240, 2011.

\bibitem[Kim(2020)]{kim2020causal}
Kwangho Kim.
\newblock \emph{Causal Inference with Complex Data Structures and Non-Standard
  Effects}.
\newblock PhD thesis, Carnegie Mellon University, 2020.

\bibitem[K{\"u}nzel et~al.(2019)K{\"u}nzel, Sekhon, Bickel, and
  Yu]{kunzel2019metalearners}
S{\"o}ren~R K{\"u}nzel, Jasjeet~S Sekhon, Peter~J Bickel, and Bin Yu.
\newblock Metalearners for estimating heterogeneous treatment effects using
  machine learning.
\newblock \emph{Proceedings of the national academy of sciences}, 116\penalty0
  (10):\penalty0 4156--4165, 2019.

\bibitem[Liberati et~al.(1986)Liberati, Himel, and
  Chalmers]{liberati1986quality}
Alessandro Liberati, HN~Himel, and TC~Chalmers.
\newblock A quality assessment of randomized control trials of primary
  treatment of breast cancer.
\newblock \emph{Journal of Clinical Oncology}, 4\penalty0 (6):\penalty0
  942--951, 1986.

\bibitem[Mavroudeas et~al.(2022)Mavroudeas, Neehal, J., Bennett, and
  Magdon-Ismail]{malik235}
G.~Mavroudeas, N.~Neehal, J., K.~Bennett, and M.~Magdon-Ismail.
\newblock Subpopulation analysis in causal inference: A healthcare case study.
\newblock In \emph{BIBM}, 2022.

\bibitem[Motwani and Raghavan(1996)]{motwani1996randomized}
Rajeev Motwani and Prabhakar Raghavan.
\newblock Randomized algorithms.
\newblock \emph{ACM Computing Surveys (CSUR)}, 28\penalty0 (1):\penalty0
  33--37, 1996.

\bibitem[Pearl(2019)]{pearl2019seven}
Judea Pearl.
\newblock The seven tools of causal inference, with reflections on machine
  learning.
\newblock \emph{Communications of the ACM}, 62\penalty0 (3):\penalty0 54--60,
  2019.

\bibitem[Pedregosa et~al.(2011)Pedregosa, Varoquaux, Gramfort, Michel, Thirion,
  Grisel, Blondel, Prettenhofer, Weiss, Dubourg, et~al.]{pedregosa2011scikit}
Fabian Pedregosa, Ga{\"e}l Varoquaux, Alexandre Gramfort, Vincent Michel,
  Bertrand Thirion, Olivier Grisel, Mathieu Blondel, Peter Prettenhofer, Ron
  Weiss, Vincent Dubourg, et~al.
\newblock Scikit-learn: Machine learning in python.
\newblock \emph{the Journal of machine Learning research}, 12:\penalty0
  2825--2830, 2011.

\bibitem[Rosenbaum(2007)]{rosenbaum2007interference}
Paul~R Rosenbaum.
\newblock Interference between units in randomized experiments.
\newblock \emph{Journal of the american statistical association}, 102\penalty0
  (477):\penalty0 191--200, 2007.

\bibitem[Rosenbaum and Rubin(1983)]{rosenbaum1983central}
Paul~R Rosenbaum and Donald~B Rubin.
\newblock The central role of the propensity score in observational studies for
  causal effects.
\newblock \emph{Biometrika}, 70\penalty0 (1):\penalty0 41--55, 1983.

\bibitem[Rubin(2005)]{rubin2005causal}
Donald~B Rubin.
\newblock Causal inference using potential outcomes: Design, modeling,
  decisions.
\newblock \emph{Journal of the American Statistical Association}, 100\penalty0
  (469):\penalty0 322--331, 2005.

\bibitem[Shalit et~al.(2017)Shalit, Johansson, and
  Sontag]{shalit2017estimating}
Uri Shalit, Fredrik~D Johansson, and David Sontag.
\newblock Estimating individual treatment effect: generalization bounds and
  algorithms.
\newblock In \emph{International Conference on Machine Learning}, pages
  3076--3085. PMLR, 2017.

\bibitem[Sharma and Kiciman(2020)]{sharma2020dowhy}
Amit Sharma and Emre Kiciman.
\newblock Dowhy: An end-to-end library for causal inference.
\newblock \emph{arXiv preprint arXiv:2011.04216}, 2020.

\bibitem[Vansteelandt and Daniel(2014)]{vansteelandt2014regression}
Stijn Vansteelandt and Rhian~M Daniel.
\newblock On regression adjustment for the propensity score.
\newblock \emph{Statistics in medicine}, 33\penalty0 (23):\penalty0 4053--4072,
  2014.

\bibitem[Wager and Athey(2018)]{wager2018estimation}
Stefan Wager and Susan Athey.
\newblock Estimation and inference of heterogeneous treatment effects using
  random forests.
\newblock \emph{Journal of the American Statistical Association}, 113\penalty0
  (523):\penalty0 1228--1242, 2018.

\bibitem[Wang and Song(2011)]{wang2011ckmeans}
Haizhou Wang and Mingzhou Song.
\newblock Ckmeans. 1d. dp: optimal k-means clustering in one dimension by
  dynamic programming.
\newblock \emph{The R journal}, 3\penalty0 (2):\penalty0 29, 2011.

\bibitem[Zhang et~al.(2021)Zhang, Schnell, Song, Huang, and
  Lu]{zhang2021subgroup}
Yuyang Zhang, Patrick Schnell, Chi Song, Bin Huang, and Bo~Lu.
\newblock Subgroup causal effect identification and estimation via matching
  tree.
\newblock \emph{Computational Statistics \& Data Analysis}, 159:\penalty0
  107188, 2021.

\end{thebibliography}

\clearpage

\appendix
%\remove
{
\newpage
\clearpage
\appendix
\section*{Appendix}
\label{sec:appendix}
We provide more detailed experimental results, specifically results for different \math{n} (20K, 200K and 2M) and a comparison of different clustering methods in the pre-clustering phase: box-only, PCM (box plus 1 step of E-M improvement) and K-means. To calculate the counterfactual for treated subjects, we train a gradient boosted forest on the control population. 

\newcommand{\colTree}{0.68}
%%cluster figures define pathb, column width number colN
\newcommand{\pathb}{canary_agglo/big_font/clusters}
\newcommand{\colN}{0.4}
%\input{canary_app_figures_tex/surv-clusters}

%\newcolumntype{C}{>{\centering\arraybackslash}m{\wid}}
%\newcommand{\bfit}[1]{\textbf{\textit{#1}}}
%\newcommand{\wid}{0.6\columnwidth}
%\newcommand{\addpic}[1]{\includegraphics[width=\wid]{#1}}
%\newcommand{\pMM}{synthetic_figures/2M/reconstruction}
%\newcommand{\pKK}{synthetic_figures/200k/reconstruction}

%\input{synthetic_figures_tex/recJoin}

%\input{synthetic_figures_tex/histJoin}

\section{Convergence with \math{n}}

\subsection{Reconstructed Subpopulations}
We show subpopulation reconstructions for \math{n\in \{20K, 200K, 2M\}}. 
\begin{center}    \adjustbox{max width=\columnwidth}{%
    \begin{tabular}{@{}l*3{C}@{}}
   % \toprule
        &{\large PCM (this work)} & {\large X-Learner} & {\large Bayes Optimal} \\\\ %\midrule
                \bfit{\large 20k} &\addpic{KKK-ITE-ADAPTIVE-3CLUST} &\addpic{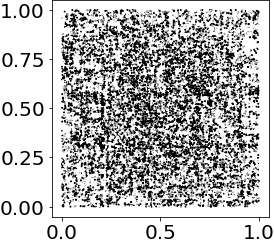} &\addpic{KKK-ITE-2CGDBR} \\\\

        \bfit{\large 200k} &\addpic{KK-ITE-ADAPTIVE-3CLUST} &\addpic{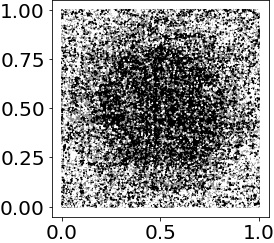} &\addpic{KK-ITE-2CGDBR} \\\\
        \bfit{\large 2M} &\addpic{MM-ITE-ADAPTIVE-3CLUST} &\addpic{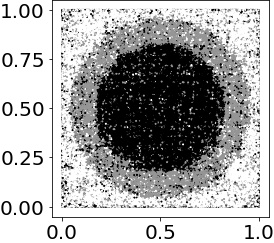} &\addpic{MM-ITE-2CGDBR}   
     %   \bottomrule
    \end{tabular}
    }
\end{center}
Even with just 20K points in this very noisy setting, PCM is able to extract some meaningful subpopulation structure, while none of the other methods can.

\newpage

\subsection{Predicted \ite{} Histograms for PCM, X-Learner and Raw \ite{}s}

We show the \ite{} histograms for \math{n\in \{20K, 200K, 2M\}}. 
\begin{center}
    \adjustbox{max width=\columnwidth}{%
    \begin{tabular}{@{}l*3{C}@{}}
   % \toprule
           &{\large PCM (our work)}  & {\large X-Learner} & {\large ITE} \\\\ %\midrule
                     \bfit{\large 20k} &\addpic{KKK-adaptivehist} &\addpic{KKK-xlearnhist} &\addpic{KKK-ITE2hist} \\\\\\\\

          \bfit{\large 200k} &\addpic{KK-adaptivehist} &\addpic{KK-xlearnhist} &\addpic{KK-ITE2hist} \\\\\\\\
        \bfit{\large 2M} &\addpic{MM-adaptivehist} &\addpic{MM-xlearnhist} &\addpic{MM-ITE2hist} 
    %    \bottomrule
    \end{tabular}
    }
    \end{center}

\newpage
\section{Different Pre-Clustering Methods}

We show the reconstructed subpopulations and effect errors for different 
pre-clustering methods. Box-clustering without any E-M step is also provably consistent. Our algorithm PCM uses box-clustering followed by an E-M step to improve the subpopulations using smoothed \ite{}s as the average \ite{} in the \math{\varepsilon}-hypercube centered on the subject.
We also show K-means pre-clustering, for which we did not prove any theoretical
guarantees.
\bigskip

%SHOW Subpopulations for 20k,200k,2M and also the %error table.
\newcommand{\kmeans}{Kmeans2-Cl2}
\newcommand{\bx}{Box2-Cl2}
\textbf{Reconstruction of Effect Subpopulations.}
\begin{center}    \adjustbox{max width=\columnwidth}{%
    \begin{tabular}{@{}l*3{C}@{}}
   % \toprule
        &{\large PCM (this work)} & {\large BOX} & {\large KMEANS} \\\\ %\midrule
                \bfit{\large 20k} &\addpic{KKK-ITE-ADAPTIVE-3CLUST} &\addpic{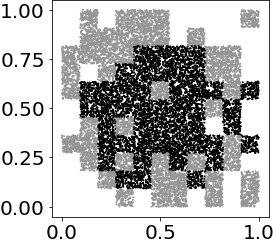} &\addpic{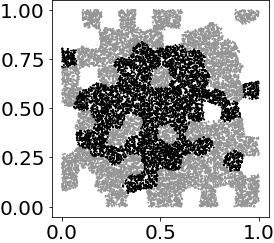} \\\\\\\\

        \bfit{\large 200k} &\addpic{KK-ITE-ADAPTIVE-3CLUST} &\addpic{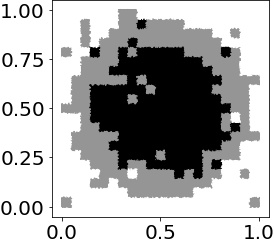} &\addpic{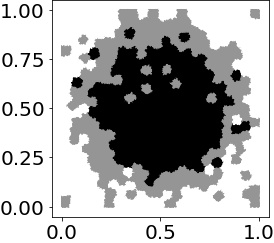} \\\\\\\\
        \bfit{\large 2M} &\addpic{MM-ITE-ADAPTIVE-3CLUST} &\addpic{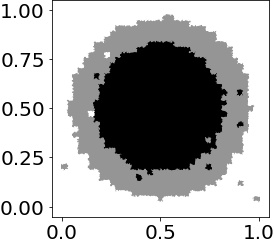} &\addpic{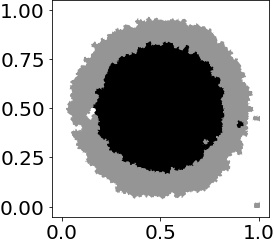}   
     %   \bottomrule
    \end{tabular}
    }
\end{center}
\clearpage

\textbf{Predicted \ite{} Histograms.}
\begin{center}
    \adjustbox{max width=\columnwidth}{%
    \begin{tabular}{@{}l*3{C}@{}}
   % \toprule
           &{\large PCM (our work)}  & {\large BOX} & {\large KMEANS} \\\\ %\midrule
                     \bfit{\large 20k} &\addpic{KKK-adaptivehist} &\addpic{KKK-Box2hist} &\addpic{KKK-KM2hist} \\\\

          \bfit{\large 200k} &\addpic{KK-adaptivehist} &\addpic{KK-Box2hist} &\addpic{KK-KM2hist} \\\\
        \bfit{\large 2M} &\addpic{MM-adaptivehist} &\addpic{MM-Box2hist} &\addpic{MM-KM2hist} 
    %    \bottomrule
    \end{tabular}
    }
    \end{center}
\textbf{Error Table.}
\begin{center}{\renewcommand{\arraystretch}{1.5}
    \begin{tabular}[b]{c|c|c|c}    
    $n$ & \textbf{PCM} (this work)&\textbf{BOX}&\textbf{KMEANS}\\
   % &&\color{red}Subpopulations&Predicted-\ite&\color{red}Subpopulations&Raw-\ite\\
    \hline
    \textbf{20K} & \textbf{0.35}$\pm$$\bm{0.39}$    & 0.50 $\pm$ 0.52   & 0.54 $\pm$ 0.50\\
    \textbf{200k} & \textbf{0.109}$\pm$$\bm{0.22}$  & 0.17 $\pm$ 0.35  & 0.20 $\pm$ 0.37\\
    \textbf{2M} &\textbf{0.036}$\pm$$\bm{0.13}$     & 0.078 $\pm$ 0.214   & 0.065 $\pm$ 0.20\\
    \end{tabular}}
\end{center}
\newpage
\section{Cluster Homogeneity}
To further show how practice reflects the theory, we plot average cluster homogeneity versus \math{n}. The cluster homogeneity is the fraction of points in a cluster that are from its majority level. 
Our entire methodology relies on the pre-clustering step producing a vast majority of homogeneous clusters.
The rapid convergence to homogeneous clusters enables us to identify the correct subpopulations and the corresponding effects via pre-cluster and merge.
\newcommand{\pathHom}{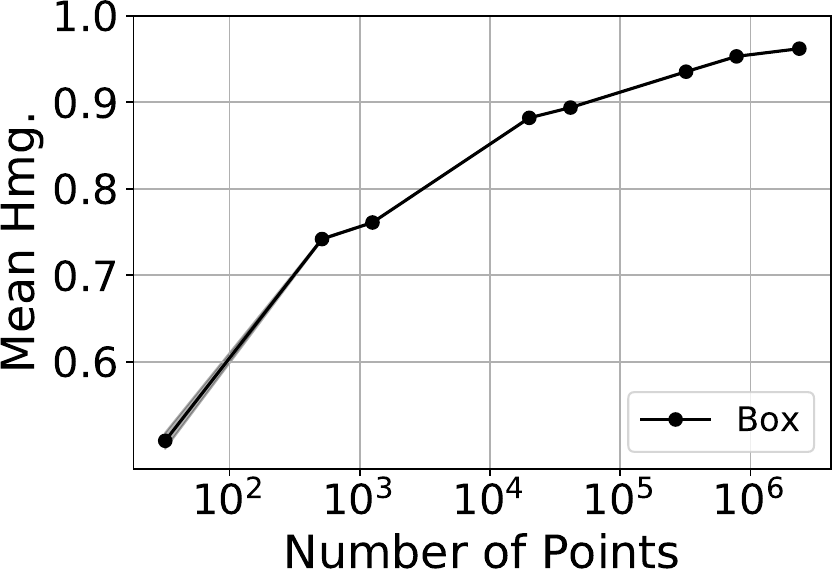}
\begin{center}
     \includegraphics[width = 0.5\columnwidth]{\pathHom}
\end{center}

}

\end{document}